\newtheorem{theorem}{Theorem}[section]
\newcolumntype{M}[1]{>{\centering\arraybackslash}m{#1}}
\title{On Neural Networks as Infinite Tree-Structured Probabilistic Graphical Models}
\author{%
  Boyao Li\thanks{Boyao Li and Alexander J. Thomson contributed equally to this work.} \\
  Department of Biostatistics and Bioinformatics\\
  Duke University \\
  \texttt{boyao.li@duke.edu} \\
  \And
  Alexander J. Thomson\footnotemark[1] \\
  Department of Computer Science \\
  Duke University \\
  \texttt{alexander.thomson@duke.edu} \\
  \And
  Houssam Nassif \\
  Meta Inc. \\
  \texttt{houssamn@meta.com}
  \And
  Matthew M. Engelhard\thanks{Matthew M. Engelhard and David Page jointly supervised this work.} \\
  Department of Biostatistics and Bioinformatics\\
  Duke University \\
  \texttt{m.engelhard@duke.edu}\\
  \And
  David Page\footnotemark[2] \\
  Department of Biostatistics and Bioinformatics\\
  Duke University \\
  \texttt{david.page@duke.edu}\\
}
\begin{document}

\maketitle
\begin{abstract}
  Deep neural networks (DNNs) lack the precise semantics and definitive probabilistic interpretation of probabilistic graphical models (PGMs).
  In this paper, we propose an innovative solution by constructing infinite tree-structured PGMs that correspond exactly to neural networks. Our research reveals that DNNs, during forward propagation, indeed perform approximations of PGM inference that are precise in this alternative PGM structure.
  Not only does our research complement existing studies that describe neural networks as kernel machines or infinite-sized Gaussian processes, it also elucidates a more direct approximation that DNNs make to exact inference in PGMs. Potential benefits include improved pedagogy and interpretation of DNNs, and algorithms that can merge the strengths of PGMs and DNNs.
\end{abstract}

\section{Introduction}

Deep neural networks (DNNs), including large language models, offer state-of-the-art performance on many tasks, but they are difficult to interpret due to their complex structure, large number of latent variables, and the presence of nonlinear activation functions \citep{buhrmester2021analysis}. To gain a precise statistical interpretation for DNNs, much progress has been made in linking them to probabilistic graphical models (PGMs). Variational autoencoders (VAEs) \citep{Kingma2014} are an early example; more recent examples include probabilistic dependency graphs \citep{richardson2022} as well as work that relates recurrent neural networks (RNNs) with hidden Markov models (HMMs) \citep{choe2017probabilistic} and convolutional neural networks (CNNs) with Gaussian processes (GPs) \citep{garriga2018deep}. When such a connection is possible, potential benefits include:
\begin{itemize}
\item
Clear statistical semantics for a trained DNN model, beyond merely providing the conditional distribution over output variables given input variables. Instead, PGMs provide a joint distribution over all variables including the latent variables.
\item
Ability to import any PGM algorithm into DNNs, such as belief propagation or MCMC, for example to {\em reverse any} DNN to use output variables as evidence and variables anywhere earlier as query variables.
\item
Improved calibration, i.e., improved predictions of probabilities at the output nodes by incorporation of PGM algorithms.
\end{itemize}

In this paper, we establish such a correspondence between DNNs of any structure and PGMs. Given an arbitrary DNN architecture, we first construct an infinite-width tree-structured PGM. We then demonstrate that during training, the DNN executes approximations of precise inference in the PGM during the forward propagation step. We prove our result exactly in the case of sigmoid activations.
We indicate how it can be extended to ReLU activations by building on a prior result of \cite{nair2010rectified}.
Because the PGM in our result is a Markov network, the construction can extend even further, to all nonnegative activation functions provided that proper normalization is employed.  We argue that modified variants of layer normalization and batch normalization could be viewed as approximations to proper MN normalization in this context, although formal analyses of such approximations are left for future work.
Finally, in the context of sigmoid activations we empirically evaluate how the second and third benefits listed above follow from the result, as motivated and summarized now in the next paragraph.

A neural network of any architecture with only sigmoid activations satisfies the definition of a Bayesian network over binary variables---it is a directed acyclic graph with a conditional probability distribution at each node, conditional on the values of the node's parents---and thus is sometimes called a Bayesian belief network (BBN).
Nevertheless, it can be shown that the standard gradient used in neural network training (whether using cross-entropy or other common error functions) is inconsistent with the BBN semantics, that is, with the probability distribution defined by this BBN.
On the other hand, Gibbs sampling is a training method consistent with the BBN semantics, and hence effective for calibration and for reversing any neural network, but it is woefully inefficient for training.
Hamiltonian Monte Carlo (HMC) is more efficient than Gibbs and better fits BBN semantics than SGD.
We demonstrate empirically that after training a network quickly using SGD, calibration can be improved by fine-tuning using HMC.
The specific HMC algorithm employed here follows directly from the theoretical result, being designed to approximate Gibbs-sampling in the theoretical, infinite-width tree structured Markov network.
The degree of approximation is controlled by the value of a single hyperparameter that is also defined based on the theoretical result.

The present paper stands apart from many other theoretical analyses of DNNs that view DNNs purely as {\em function approximators} and prove theorems about the quality of function approximation.
Here we instead show that DNNs may be viewed as statistical models, specifically PGMs.
This work is also different from the field of {\em Bayesian neural networks}, where the goal is to seek and model a probability distribution over neural network parameters. In our work, the neural network itself defines a joint probability distribution over its variables (nodes). Our work therefore is synergistic with Bayesian neural networks
but more closely related to older work on learning stochastic neural networks via expectation maximization (EM) \citep{amari1995information} or approximate EM \citep{song2016learning}.

Although the approach is different, our motivation is similar to that of \citet{dutordoir2021deep} and \citet{sun2020neural} in their work to link DNNs to deep Gaussian processes (GPs) \citep{damianou2013deep}. By identifying the forward pass of a DNN with the mean of a deep GP layer, they aim to augment DNNs with advantages of GPs, notably the ability to quantify uncertainty over both output and latent nodes. What distinguishes our work from theirs is that we make the DNN-PGM approximation explicit and include \textit{all} sigmoid DNNs, not just unsupervised belief networks or other specific cases.

All code needed to reproduce our experimental results may be found at \url{https://github.com/engelhard-lab/DNN_TreePGM}.

\section{Background: Comparison to Bayesian Networks and Markov Networks}

Syntactically a Bayesian network (BN) is a directed acyclic graph, like a neural network, whose nodes are random variables.
Here we use capital letters to stand for random variables, and following Russell and Norvig \citep{RN2020} and others, we take a statement written using such variables to be a claim for all specific settings of those variables.
Semantically, a BN represents a full joint probability distribution over its variables as $P(\vec{V}) = \prod_i P(V_i|pa(V_i))$, where $pa(V_i)$ denotes the parents of variable $V_i$.
If the conditional probability distributions (CPDs) $P(V_i|pa(V_i))$ are all logistic regression models, we refer to the network as a sigmoid BN.

It is well known that given sigmoid activation and a cross-entropy error, training a single neuron by gradient descent is identical to training a logistic regression model. Hence, a neural network under such conditions can be viewed as a “stacked logistic regression model”, and also as a Bayesian network with logistic regression CPDs at the nodes. Technically, the sigmoid BN has a distribution over the input variables (variables without parents), whereas the neural network does not, and all nodes are treated as random variables. These distributions are easily added, and distributions of the input variables can be viewed as represented by the joint sample over them in our training set.

A Markov network (MN) syntactically is an undirected graph with potentials $\phi_i$ on its cliques, where each potential gives the relative probabilities of the various settings for its variables (the variables in the clique). Semantically, it defines the full joint distribution on the variables as $P(\vec{V}) = \frac{1}{Z} \prod_i \phi_i(\vec{V})$ where the partition function $Z$ is defined as $\sum_{\vec{V}} \prod_i \phi_i(\vec{V})$.
It is common to use a loglinear form of the same MN, which can be obtained by treating a setting of the variables in a clique as a binary feature $f_i$, and the natural log of the corresponding entry for that setting in the potential for that clique as a weight $w_i$ on that feature; the equivalent definition of the full joint is then $P(\vec{V}) = \frac{1}{Z} e^{\sum_i w_i f_i(\vec{V})}$.
For training and prediction at this point the original graph itself is superfluous.

The potentials of an MN may be on subsets of cliques; in that case we simply multiply all potentials on subsets of a clique to derive the potential on the clique itself.
If the MN can be expressed entirely as potentials on edges or individual nodes, we call it a ``pairwise'' MN.
An MN whose variables are all binary is a binary MN.

A DNN of any architecture is, like a Bayesian network, a directed acyclic graph. A sigmoid activation can be understood as a logistic model, thus giving a conditional probability distribution for a binary variable given its parents. Thus, there is a natural interpretation of a DNN with sigmoid activations as a Bayesian network (e.g., Bayesian belief network). Note, however, that when the DNN has multiple, stacked hidden nodes, the values calculated for those nodes in the DNN by its forward pass do not match the values of the corresponding hidden nodes in a Bayesian network. Instead, for the remainder of this paper, we adopt the view that the DNN's forward pass might serve as an approximation to an underlying PGM and explore how said approximation can be precisely characterized.
As reviewed in Appendix \ref{pf:bn_mn}, this Bayes net in turn is equivalent to (represents the same probability distribution) as a Markov network where every edge of weight $w$ from variable $A$ to variable $B$ has a potential of the following form:
\begin{center}
\begin{tabular}{ |c|c|c| }
\hline
& $B$ & $\neg B$ \\
\hline
$A$ & $e^{w}$ & 1 \\
\hline
$\neg A$ & $1$ & 1 \\
\hline
\end{tabular}
\end{center}

\vspace{1em}
For space reasons, we assume the reader is already familiar with the Variable Elimination (VE) algorithm for computing the probability distribution over any query variable(s) given evidence (known values) at other variables in the network.  This algorithm is identical for Bayes nets and Markov nets.  It repeatedly multiplies together all the potentials (in a Bayes net, conditional probability distributions) involving the variable to be eliminated, and then sums that variable out of the resulting table, until only the query variable(s) remain.  Normalization of the resulting table yields the final answer.  VE is an exact inference algorithm, meaning its answers are exactly correct.

\section{The Construction of Tree-structured PGMs}

Although both a binary pairwise Markov network (MN) and a Bayesian network (BN) share the same sigmoid functional structure as a DNN with sigmoid activations, it can be shown that the DNN does not in general define the same probability for the output variables given the input variables: forward propagation in the DNN is very fast but yields a different result than VE in the MN or BN, which can be much slower because the inference task is NP-complete.  Therefore, if we take the distribution $\cal{D}$ defined by the BN or MN to be the correct meaning of the DNN, the DNN must be using an approximation $\cal{D'}$ to $\cal{D}$.
Procedurally, the approximation can be shown to be exactly the following: the DNN repeatedly treats the {\em expectation} of a variable $V$, given the values of $V$'s parents, as if it were the actual {\em value} of $V$.  Thus previously binary variables in the Bayesian network view and binary features in the Markov network view become continuous.  This procedural characterization of the approximation of $\cal{D'}$ to $\cal{D}$ yields exactly the forward pass in the neural network within the space of a similarly structured PGM, yet, on its own, does not yield a precise joint distribution for said PGM. We instead prefer in the PGM literature to characterize approximate distributions such as $\cal{D'}$ with an alternative PGM that precisely corresponds to $\cal{D'}$; for example, in some variational methods we may remove edges from a PGM to obtain a simpler PGM in which inference is more efficient. 
Treewidth-1 (tree-structured or forest-structured) PGMs are among the most desirable because in those models, exact inference by VE or other algorithms becomes efficient. We seek to so characterize the DNN approximation here.

This approach aligns somewhat with the idea of the computation tree that has been used to explore the properties of belief propagation by expressing the relevant message passing operations in the form of a tree \citep{comptree1, comptree2, comptree3}. Naturally the design of the tree structured PGM proposed here differs from the computation trees for belief propagation as we instead aim to capture the behavior of the forward pass of the neural network. Nonetheless, both methods share similar general approaches, the construction of a simpler approximate PGM, and aims, to better understand the theoretical behavior of an approximation to a separate original PGM.

To begin, we consider the Bayesian network view of the DNN. Our first step in this construction is to copy the shared parents in the network into separate nodes whose values are not tied. The algorithm for this step is as follows: 

\begin{enumerate}[a.]
    \item Consider the observed nodes in the Bayesian network that correspond to the input of the neural network and their outgoing edges.
    \item At each node, for each outgoing edge, create a copy of the current node that is only connected to one of the original node's children with that edge. Since these nodes are observed at this step, these copies do all share the same values. The weights on these edges remain the same.
    \item Consider then the children of these nodes. Again, for each outgoing edge, make a copy of this node that is only connected to one child with that edge. In this step, for each copied node, we then also copy the entire subgraph formed by all ancestor nodes of the current node. Note that while weights across copies are tied, the values of the copies of any node are not tied. However, since we also copy the subtree of all input and intermediary hidden nodes relevant to a forward pass up to each copy, the probability of any of these copied nodes being true remains the same across copies (ignoring the influence of any information passed back from their children).
    \item We repeat this process across each layer until we have separate trees for each output node in the original deep neural network graph.
\end{enumerate}

\noindent This process ultimately creates a graph whose undirected structure is a tree or forest. In the directed structure, trees converge at the output nodes.
The probability of any copy of a latent node given the observed input (and ignoring any information passed back through a node's descendant) is the same across all the copies, but when sampling, their values may not be.

\begin{figure}[ht]
    \centering
    \begin{subfigure}{0.45\textwidth}
    \centering
    \includegraphics[width=\textwidth]{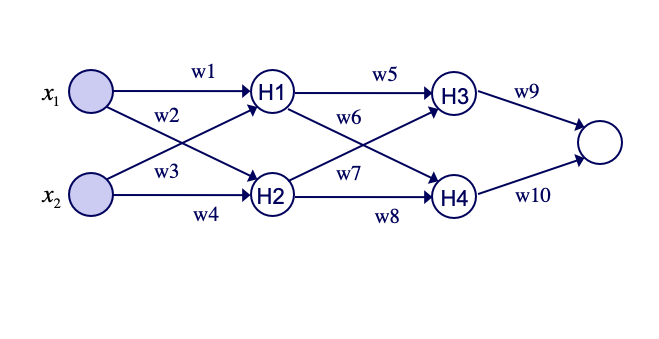}
    \caption{The neural network's graphical structure before applying the first step of this PGM construction.}
    \label{fig:part1a}
    \end{subfigure}
    \hfill
    \begin{subfigure}{0.45\textwidth}
    \centering
    \includegraphics[width=\textwidth]{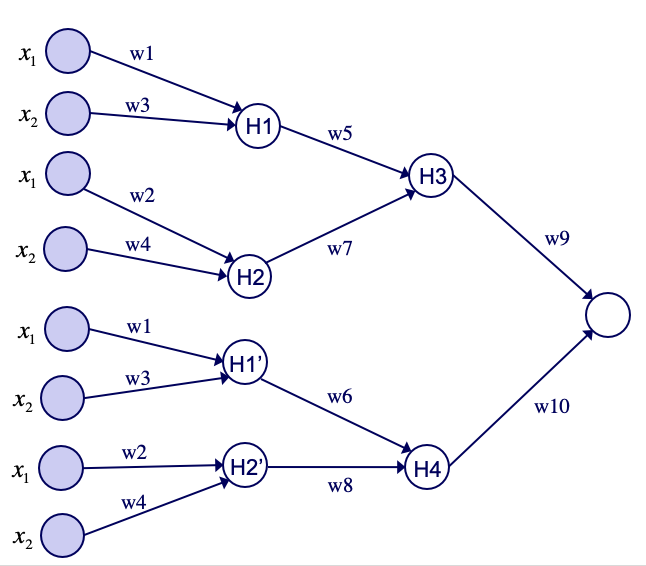}
    \caption{The neural network's graphical structure after applying the first step of this PGM construction.}
    \label{fig:part1b}
    \end{subfigure}
    \vspace{1em}
    \caption{The first step of the PGM construction where shared latent parents are separated into copies along with the subtree of their ancestors. Copies of nodes H1 and H2 are made in this example.}
    \label{fig:constructpart1}
\end{figure}

\noindent The preceding step alone is still not sufficient to accurately express the deep neural network as a PGM. Recall that in the probabilistic graphical model view of the approximation made by the DNN's forward pass, the neural network effectively takes a local average, in place of its actual value, from the immediately previous nodes and passes that information only forward. The following additional step in the construction yields this same behavior. This next step of the construction creates $L$ copies of every non-output node in the network (starting at the output and moving backward) while also copying the entire ancestor subtrees of each of these nodes, as was done in step 1. The weight of a copied edges is then set to its original value divided by $L$. Note that this step results in a number of total copies that grows exponentially in the number of layers (i.e. $L$ copies in the $2$nd to last layer, $L^2$ copies in the layer before, etc). Detailed algorithms for the two steps in the construction of the infinite tree-structured PGM are presented in Appendix \ref{alg:construct}. As $L$ approaches infinity, we show that both inference and the gradient in this PGM construction matches the forward pass and gradient in the neural network exactly.


This second step in the construction can be thought of intuitively by considering the behavior of sampling in the Bayesian network view. Since we make $L$ copies of each node while also copying the subgraph of its ancestors, these copied nodes all share the same probabilities. As $L$ grows large, even if we sampled every copied node only once, we would expect the average value across these $L$ copies to match the probability of an individual copied node being true. Given that we set the new weights between these copies and their parents as the original weights divided by $L$, the sum of products (new weights times parent values) yields the average parent value multiplied by the original weight. As $L$ goes to infinity, we remove sampling bias and the result exactly matches the value of the sigmoid activation function of the neural network, where this expectation in the PGM view is passed repeatedly to the subsequent neurons.
The formal proof of this result, based on variable elimination, is found in Appendix \ref{pf:tree}. There, we show the following:

\begin{theorem}[Matching Probabilities]
\label{PGMeq}
In the PGM construction, as $L \to \infty$,
$P(H=1 | \vec{x}) \to \sigma(\sum_{j=1}^M w_j g_j + \sum_i^N \theta_i \sigma(p_i))$, for an arbitrary latent node $H$ in the DNN that has observed parents $g_1, ..., g_M$ and latent parents $h_1, ..., h_N$ that are true with probabilities $\sigma(p_1), ..., \sigma(p_N)$. Here, $\sigma(\cdot)$ is the logistic sigmoid function and $w_1, ..., w_M$ and $\theta_1, ..., \theta_N$ are the weights on edges between these nodes and $H$.
\end{theorem}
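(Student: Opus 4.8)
The plan is to compute $P(H=1\mid\vec x)$ directly by running Variable Elimination on the tree-structured Markov network produced by the construction, using the edge-potential table given in Section 2, and then to take the limit $L\to\infty$. Because the construction severs all shared parents and the trees converge only at the output, the subgraph of ancestors feeding a single copy of $H$ is itself a tree: each latent parent $h_i$ appears as $L$ independent copies $h_i^{(1)},\dots,h_i^{(L)}$, each joined to $H$ by an edge of weight $\theta_i/L$, and each copy carries its own disjoint copy of the ancestor subtree below it. First I would record the inductive hypothesis that the downward message arriving at every copy $h_i^{(k)}$ is proportional to the marginal $[\,1-\sigma(p_i),\ \sigma(p_i)\,]$ for the settings $[h_i^{(k)}=0,\ h_i^{(k)}=1]$; this is exactly the statement of the theorem applied one layer earlier, and the tree structure guarantees these messages are mutually independent, so in VE they simply multiply.

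Next I would eliminate the parents of $H$ one family at a time. For the observed parents, the $L$ copies of $g_j$ each contribute the potential factor $e^{(w_j/L)g_j\,\mathbf{1}[H=1]}$, so the $L$ copies collapse to $e^{w_j g_j\,\mathbf{1}[H=1]}$ and the observed block yields $e^{\mathbf{1}[H=1]\sum_j w_j g_j}$, recovering the first term of the target. For each latent copy $h_i^{(k)}$, summing the product of its downward message with the edge potential gives the scalar $1+\sigma(p_i)\bigl(e^{\theta_i/L}-1\bigr)$ when $H=1$ and $1$ when $H=0$; multiplying over the $L$ identical copies produces $\bigl[1+\sigma(p_i)(e^{\theta_i/L}-1)\bigr]^{L}$ in the $H=1$ slot. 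Expanding $e^{\theta_i/L}-1=\theta_i/L+O(L^{-2})$ and invoking $(1+a/L)^{L}\to e^{a}$, this factor tends to $e^{\theta_i\sigma(p_i)}$ as $L\to\infty$, contributing the second term of the target.

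Collecting the blocks, the unnormalized potential at $H$ converges to $e^{\sum_j w_j g_j+\sum_i\theta_i\sigma(p_i)}$ for $H=1$ and to $1$ for $H=0$; local normalization between these two values then yields $P(H=1\mid\vec x)\to\sigma\bigl(\sum_j w_j g_j+\sum_i\theta_i\sigma(p_i)\bigr)$, as claimed. Here I would stress that, consistent with the forward pass, the query ignores any message sent back from $H$'s descendants, so the final normalization is purely local to $H$ given its incoming ancestor messages.

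I expect the main obstacle to be the rigorous justification of the inductive step rather than the limit itself. One must verify that copying each entire ancestor subtree really does render the $L$ downward messages into $H$ both identical and independent, with common value $\sigma(p_i)$, so that they factor cleanly as an $L$-th power; establishing this requires carefully tracking how the per-layer $1/L$ reweighting and the exponential $L^{k}$ duplication interact across all earlier layers, and confirming that the normalizing constants dropped when eliminating each subtree cancel and do not perturb the final local normalization. Once that structural claim is secured, the analytic limit $(1+a/L)^{L}\to e^{a}$ is routine.
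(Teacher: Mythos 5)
Your overall strategy is the same as the paper's: run VE on the tree-structured Markov network, use the inductive hypothesis that each latent parent delivers a downward message proportional to $[\,1-\sigma(p_i),\ \sigma(p_i)\,]$, collapse the $L$ independent copies of each parent into an $L$-th power, and pass to the limit. Your way of taking that limit, $\bigl[1+\sigma(p_i)(e^{\theta_i/L}-1)\bigr]^{L}\to e^{\theta_i\sigma(p_i)}$ via $(1+a/L)^{L}\to e^{a}$, is a somewhat cleaner route than the paper's change of variables $S=1/L$ followed by l'H\^opital, and the two are equivalent: the paper's unnormalized factor $(e^{p_i}e^{\theta_i/L}+1)^{L}/(e^{p_i}+1)^{L}$ is exactly your normalized one.

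The gap is in your final step, where you declare that, "consistent with the forward pass, the query ignores any message sent back from $H$'s descendants, so the final normalization is purely local." The theorem is a statement about the marginal $P(H=1\mid\vec x)$ in the Markov network, and for finite $L$ that marginal is \emph{not} determined by the ancestor messages alone: summing out the subtree below $H$ produces a backward message of the form $[\,c\,e^{\gamma/L}+1,\ c+1\,]$ on $[H=1,\ H=0]$, where $\gamma$ is the original weight on the edge to $H$'s child and $c$ is the accumulated potential from below, and this genuinely skews $P(H=1\mid\vec x)$ away from the locally normalized value. You cannot dispose of it as a modeling convention; you must prove it is asymptotically harmless. The paper does exactly this: because the construction divides the child-edge weight by $L$, the ratio $(c+1)/(c\,e^{\gamma/L}+1)\to 1$ as $L\to\infty$, so the backward message becomes uniform in the limit and only then does local normalization of the ancestor potential yield the claimed sigmoid. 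Adding that one computation closes the gap; the rest of your argument, including your correct identification of the independence and identical distribution of the $L$ subtree messages as the structural point requiring care, matches the paper's proof.
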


\vspace{1em}


The PGM of our construction is a Markov network that always has evidence at the nodes $\vec{X}$ corresponding to the input nodes of the neural network.  As such, it is more specifically a conditional random field (CRF).  Theorem 1 states the probability that a given node anywhere in the CRF is true given $\vec{X}$ equals the output of that same node in the neural network given input $\vec{X}$.  The CRF may also have evidence at the nodes $\vec{Y}$ that correspond to the output nodes of the neural network.  Given the form of its potentials, as illustrated at the end of Section 2, the features in the CRF's loglinear form correspond exactly to the edges and are true if and only if the nodes on each end of the edge are true.
It follows that the gradient of this CRF can be written as a vector with one entry for each feature $f$ corresponding to each weight $w$ of the neural network, of the form $P(f|\vec{X}) - P(f|\vec{X},\vec{Y})$.  Building on Theorem 1, this gradient of the CRF can be shown to be identical to the gradient of the cross-entropy loss in the neural network: the partial derivative of the cross-entropy loss with respect to the weight $w$ on an edge, or feature $f$ of the CRF, is $P(f|\vec{X}) - P(f|\vec{X},\vec{Y})$.  This result is more precisely stated below in Theorem 2 below, which is proven in Appendix \ref{pf:grad}.

\begin{theorem}[Matching Gradients]
\label{eq:treegrad}
In the PGM construction, as $L \to \infty$, the derivative of the marginal log-likelihood, where all hidden nodes have been summed out, with respect to a given weight exactly matches the derivative of the cross entropy loss in the neural network with respect to the equivalent weight in its structure.
\end{theorem}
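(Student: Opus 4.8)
The plan is to reduce the claim to the classical identity for the gradient of the conditional log-likelihood of a loglinear model, and then match that identity term by term against the backpropagation gradient of the neural network. Throughout I treat the construction as the conditional random field with joint $P(\vec{H},\vec{Y}\mid\vec{X})\propto \exp(\sum_k w_k f_k)$, where each $f_k$ is the binary indicator of the edge carrying weight $w_k$, true exactly when both endpoints of that edge are true (as noted after Theorem~\ref{PGMeq}). The quantity to be differentiated is the marginal log-likelihood $\ell(\vec{Y}\mid\vec{X})=\log\sum_{\vec{H}}\exp(\sum_k w_k f_k)-\log Z(\vec{X})$, where the inner sum ranges over all hidden assignments and $Z(\vec{X})$ over all hidden and output assignments.

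First I would compute $\partial\ell/\partial w_k$ directly. Differentiating the two log-sum-exp terms produces two expectations of the sufficient statistic $f_k$: one under the distribution clamped to both $\vec{X}$ and $\vec{Y}$, and one under the distribution clamped to $\vec{X}$ alone. Because each $f_k$ is a $\{0,1\}$ indicator, these expectations are exactly the probabilities $P(f_k\mid\vec{X},\vec{Y})$ and $P(f_k\mid\vec{X})$, so the cross-entropy loss (the negative marginal log-likelihood) has gradient $\partial(-\ell)/\partial w_k = P(f_k\mid\vec{X})-P(f_k\mid\vec{X},\vec{Y})$. This step is standard and uses nothing about the tree beyond the fact that the model is loglinear with binary edge features.

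Next I would evaluate each of the two probabilities in the infinite tree and compare them against the backpropagation gradient. Since the construction is treewidth-$1$, variable elimination (equivalently belief propagation) is exact, and as $L\to\infty$ the forward messages reproduce the sigmoid activations by Theorem~\ref{PGMeq}; hence for an edge from a node $A$ to its child $B$, the forward-only term $P(f_k\mid\vec{X})=P(A{=}1,B{=}1\mid\vec{X})$ is expressible through the forward activations already characterized there. In parallel I would write the neural-network cross-entropy gradient for the same weight in the usual form $\partial C/\partial w_k=a_A\,\delta_B$, with $a_A$ the forward activation of the edge's tail and $\delta_B$ the backpropagated error at its head, and expand $\delta_B$ by the chain rule down to the clamped outputs. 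The objective is to show the two expressions coincide in the limit.

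The main obstacle is the clamped term $P(f_k\mid\vec{X},\vec{Y})$: whereas $P(f_k\mid\vec{X})$ follows directly from the forward analysis of Theorem~\ref{PGMeq}, obtaining $P(f_k\mid\vec{X},\vec{Y})$ requires propagating the output evidence $\vec{Y}$ backward through the tree, and I must show that these backward messages, after the same $L\to\infty$ averaging argument used for the forward pass, collapse to exactly the backpropagated error signals $\delta_B$. Concretely, I expect to prove by induction from the output layer inward that the difference $P(f_k\mid\vec{X})-P(f_k\mid\vec{X},\vec{Y})$ telescopes into the product of the forward activation of the tail node and the backpropagated delta at the head node, mirroring the recursion of backpropagation; controlling the $L\to\infty$ limit of the backward messages across the exponentially many tied copies, so that the per-copy backward corrections average to the correct delta just as the forward values did in Theorem~\ref{PGMeq}, is the delicate part. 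The base case at the output nodes is immediate, since clamping $\vec{Y}$ forces $P(Y{=}1\mid\vec{X},\vec{Y})$ to the observed label and the residual reduces to the familiar output error $a_Y-y$.
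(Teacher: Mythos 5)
Your proposal follows essentially the same route as the paper's proof: both start from the standard CRF gradient identity $P(f_k\mid\vec{X})-P(f_k\mid\vec{X},\vec{Y})$ for binary edge features, then telescope the difference of clamped and unclamped pairwise marginals inward from the output using the tree's conditional independencies, turning each factor into a sigmoid finite difference whose $L\to\infty$ limit is the local derivative, with the per-edge $1/L$ factors cancelling against the $L^{n+1}$ tied copies of each path. The paper carries out the induction you only sketch (and first sums out $y$ to extract the $(y-\hat{y})$ factor before telescoping), but the decomposition, the inductive structure, the output-layer base case, and the delicate limit-and-copy-counting step you flag are exactly the ones it uses.
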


\section{Implications and Extensions}
We are not claiming that one should actually carry out the PGM construction used in the preceding section, since that PGM is infinite.
Rather, its contribution is to give precise semantics to an entire neural network, as a joint probability distribution over all its variables, not merely as a machine computing the probability of its output variables given the input variables.
Any Markov network, including any CRF, precisely defines a joint probability distribution over all its variables, hidden or observed, in a standard, well-known fashion.
The particular CRF we constructed is the {\em right} one in the very specific sense that it agrees with the neural network exactly in the gradients both use for training (Theorem 2).
While the CRF is infinite, it is built using the original neural network as a template in a straightforward fashion and is tree-structured, and hence it is easy to understand.
Beyond these contributions to pedagogy and comprehensibility, are there other applications of the theoretical results?

One application is an ability to use standard PGM algorithms such as Markov chain Monte Carlo (MCMC) to sample latent variables given observed values of input and output variables, such as for producing confidence intervals or understanding relationships among variables.
One could already do so using Gibbs sampling in the BN or MN directly represented by the DNN itself (which we will call the ``direct PGM''), but then one wouldn't be using the BN or MN with respect to which SGD training in the DNN is correct.
For that, our result has shown that one instead needs to use Gibbs sampling in the infinite tree-structured PGM, which is impractical.  Nevertheless, for any variable $V$ in the original DNN, on each iteration a Gibbs sampler takes infinitely many samples of $V$ given infinitely many samples of each of the members of $V$'s Markov blanket in the original DNN.
By treating the variables of the original DNN as continuous, with their values approximating their sampled probabilities in the Gibbs sampler, we can instead apply Hamiltonian Monte Carlo or other MCMC methods for continuous variables in the much smaller DNN structure.  We explore this approach empirically rather than theoretically in the next section.
Another, related application of our result is that one could further fine-tune the trained DNN using other PGM algorithms, such as contrastive divergence.
We also explore this use in the next section.

One might object that most results in this paper use sigmoid activation functions.
Nair and Hinton showed that rectified linear units (ReLU) might be thought of as a combination of infinitely many sigmoid units with varying biases \citep{nair2010rectified}.
Hence our result in the previous section can be extended to ReLU activations by the same argument.
More generally, with any non-negative activation function that can yield values greater than one, while our BN argument no longer holds, the MN version of the argument can be extended.
An MN already requires normalization to represent a probability distribution.
While
Batch Normalization and Layer Normalization typically are motivated procedurally, to keep nodes from ``saturating,'' and consequently to keep gradients from ``exploding'' or ``vanishing,''
as the names suggest, they might also be used to bring variables into the
range $[0,1]$ and hence to being considered as probabilities.
Consider an idealized variant of these that begins by normalizing all the values coming from a node $h$ of a neural network, over a given minibatch, to sum to $1.0$; the argument can be extended to a set of $h$ and all its siblings in a layer (or other portion of the network structure) assumed to share their properties.
It is easily shown that if the parents of any node $h$ in the neural network provide to $h$ approximate probabilities that those parent variables are true in the distribution defined by the Markov network given the inputs, then $h$ in turn provides to its children an approximate probability that $h$ is true in the distribution defined by the Markov network given the inputs.
Use of a modified Batch or Layer Normalization still would be only approximate and hence adds an additional source of approximation to the result of the preceding section.
Detailed consideration of other activation functions is left for further work; in the next section we return to the sigmoid case.

\section{Application of the Theory: A New Hamiltonian Monte Carlo Algorithm}
%
To illustrate the potential utility of the infinite tree-structured PGM view of a DNN, in this section we pursue one of its implications in greater depth; other implications for further study are summarized in the Conclusion.
We have already noted we can view forward propagation in an all-sigmoid DNN as exact inference in a tree-structured PGM, such that the CPD of each hidden variable is a logistic regression.
In other words, each hidden node is a Bernoulli random variable, with parameter $\lambda$ being a sigmoid activation (i.e. logistic function) applied to a linear function of the parent nodes. This view suggests alternative learning or fine-tuning algorithms such as contrastive divergence (CD) \citep{carreira2005contrastive,bengio2009,SutskeverTieleman}.
CD in a CRF uses MCMC inference with many MCMC chains to estimate the joint probability over the hidden variables given the evidence (the input and output variables in a standard DNN), and then takes a gradient step based on the results of this inference.
But to increase speed, CD-$n$ advances each MCMC chain only $n$ steps before the next gradient step, with CD-1 often being employed.
CD has a natural advantage over SGD, which samples the hidden variable values using only evidence in input values;
instead, MCMC in CD uses {\em all} the available evidence, both at input and output variables. Unfortunately, if the MCMC algorithm employed is Gibbs sampling on the many hidden variables found in a typical neural network, then it suffers from high cost in computational resources.
MCMC has now advanced far beyond Gibbs sampling with methods such as Hamiltonian Monte Carlo (HMC), but HMC samples values in $[0,1]$ rather than $\{0,1\}$.
\citet{neal2012bayesian} first applied HMC to neural nets to sample the weights in a Bayesian approach, but still used Gibbs sampling on the hidden variables.
Our theoretical results for the first time justify the use of HMC over the hidden variables rather than Gibbs sampling in a DNN, as follows.

Recall that the DNN is itself a BN with sigmoid CPDs, but if we take the values of the hidden variables to be binary then DNN training is not correct with respect to this BN. Instead, based on the correctness of our infinite tree-structured PGM, the probabilistic behavior of one hidden node in the BN is the result of sampling values across its $L$ copies in the PGM. Within any copy, the value of the hidden node follows the Bernoulli distribution with the same probability distribution as the other copies, determined by the parent nodes. Since all the copies share the same parent nodes by the construction and are sampled independently, the sample average follows a normal distribution as the asymptotic distribution when $L\rightarrow\infty$ by the central limit theorem. In practice, $L$ is finite and this normal distribution is a reasonable approximation to the distribution of the hidden node.  Thus in the BN, whose variables correspond exactly to those of the DNN, the variables have domain [0,1] rather than $\{0,1\}$, as desired. We next precisely define this BN and the resulting HMC algorithm.

\subsection{Learning via Contrastive Divergence with Hamiltonian Monte Carlo Sampling}

Consider a Bayesian network composed of input variables $\bm{x}=\bm{h}_0$, a sequence of layers of hidden variables $\bm{h}_1, ..., \bm{h}_K$, and output variables $\bm{y}$. Each pair of consecutive layers forms a bipartite subgraph of the network as a whole, and the variables $\bm{h}_i = (h_{i1}, ..., h_{iM_i})$ follow a multivariate normal distribution with parameters $\bm{p}_i = (p_{i1}, ..., p_{iM_i})$ that depend on variables in the previous layer $\bm{h}_{i-1}$ as follows:
\begin{equation}
    h_{ij} \sim \mathcal{N}(p_{ij}, p_{ij}(1-p_{ij})/L)\text{, where } \bm{p_i} = \sigma(\bm{W}_{i-1}\bm{h}_{i-1} + \bm{b}_{i-1}),
    \label{eq:cb}
\end{equation}
where $\sigma: \mathbb{R} \to (0, 1)$ is a non-linearity -- here the logistic function -- that is applied element-wise, and $\bm{\theta}_i = (\bm{W}_i, \bm{b}_i)$ are parameters to be learned. The distribution in equation \eqref{eq:cb} is motivated by supposing that $h_{ij}$ is the average of $L$ copies of the corresponding node in the PGM, each of which is 1 with probability $p_{ij}$ and zero otherwise, then applying the normal approximation to the binomial distribution. Importantly, this approximation is valid only for large $L$.

For a complete setting of the variables $\{\bm{x}, \bm{h}, \bm{y}\}$, where $\bm{h}=\{\bm{h}_1, ..., \bm{h}_K\}$, and parameters $\bm{\theta}=\{\bm{\theta}_i\}_{i=0}^K$, the likelihood $p(\bm{y}, \bm{h} | \bm{x} ; \bm{\theta})$ may be decomposed as:
\begin{equation}
    p(\bm{y}, \bm{h} | \bm{x} ; \bm{\theta}) = p(\bm{y} | \bm{h}_K; \bm{\theta}_K) \cdot \prod_{i=1}^K\prod_{j=1}^{M_i}p_{\mathcal{N}}(h_{ij} | p_{ij}(\bm{h}_{i-1};\bm{\theta}_{i-1})),
    \label{eq:lik}
\end{equation}
where $p_\mathcal{N}(\cdot|\cdot)$ denotes the normal density, and a specific form for $p(\bm{y} | \bm{h_K}; \bm{\theta_K})$ has been omitted to allow variability in the output variables. In our experiments, $\bm{y}$ is a Bernoulli(binary) or categorical random variable parameterized via the logistic(sigmoid) or softmax function, respectively.

Let $\bm{h}^{(0)}, \bm{h}^{(1)}, \bm{h}^{(2)}, ...$ denote a chain of MCMC samples of the complete setting of hidden variables in the neural network. As previously noted, we allow hidden variables $h_{ij}\in(0,1)$ for $i\in\{1,...,K\}$ and $j\in\{1,...,M_i\}$, and use Hamiltonian Monte Carlo (HMC) to generate the next state due to its fast convergence. Since HMC samples are unbounded, we sample the \textit{logit} associated with $h_{ij} \in (0,1)$, i.e. $\sigma^{-1}(h_{ij}) \in (-\infty,\infty)$, rather than sampling the $h_{ij}$ directly.

The HMC trajectories are defined by Hamilton's Equations:
\begin{align}
\frac{d\rho_i}{dt}=\frac{\partial{H}}{\partial{\mu_i}}&&\frac{d\mu_i}{dt}=-\frac{\partial{H}}{\partial{\rho_i}}
\end{align}
where $\rho_i,\mu_i$ are the $i$th component of the position and momentum vector. They are intermediate variables used to generate a new state for the MCMC chain. The Hamiltonian $H$ is
\begin{align}
    H=H(\bm{\rho},\bm{\mu})=U(\bm{\rho})+\frac{1}{2}\bm{\mu}^TM^{-1}\bm{\mu}
\end{align}
where $M^{-1}$ is a positive definite convariance matrix and acts as a metric to rotate and scale the target distribution, which is usually set to identity matrix in practice. Defining the position $\bm{\rho}=\bm{h}$, the complete set of hidden variables of the network, we have that the potential energy $U$ is the negative log-likelihood associated with equation~\eqref{eq:lik}:
\begin{equation}
    U(\bm{h}) = -\log p(\bm{y}, \bm{h} | \bm{x} ; \bm{\theta}) = -\log p(\bm{y} | \bm{h}_K; \bm{\theta}_K) - \sum_{i=1}^K\sum_{j=1}^{M_i} \log p_{\mathcal{N}}(h_{ij} | p_{ij}(\bm{h}_{i-1};\bm{\theta}_{i-1})).
\end{equation}
We set the leap frog size $l>0$, step size $\Delta t>0$. A description of the HMC trajectories (\textit{i.e.}, evolution of $\bm{h}$) is provided in Appendix \ref{hmc_traj}.

The initial state of the chain $\bm{h}^{(0)}$ is drawn with a simple forward pass through the network, ignoring the output variables; in other words, we have $h_{ij}^{(0)}\sim \mathcal{N}(\sigma(\bm{W}_{i-1}^{(0)}\bm{h}_{i-1}^{(0)}+\bm{b}_{i-1}^{(0)})_j)$ for $i \in \{1,...K\}$, where $\bm{h}_0=\bm{x}$ are the input variables, and the values of $\bm{W}_i^{(0)}$ and $\bm{b}_i^{(0)}$ are manually set or drawn from a standard normal or uniform distribution. We update $\bm{h}$ through a number of burn-in steps before beginning to update our parameters to ensure that $\bm{h}$ is first consistent with evidence from the output variables.
After $k$ steps, corresponding to CD-$k$, we define the loss based on equation~\eqref{eq:lik}:
\begin{equation}
    \mathcal{L}(\bm{\theta}^{(n)})= -\log p(\bm{y}, \bm{h} | \bm{x} ; \bm{\theta}^{(n)}).
\label{eq:loss_func}
\end{equation}
We then apply the following gradients to update the parameters $\{\bm{W}_i^{(n)}\}_{i=0}^K$ and $\{\bm{b}_i^{(n)}\}_{i=0}^K$:
\begin{align}
    \bm{W}_i^{(n+1)}=\bm{W}_i^{(n)}-\eta\frac{\partial \mathcal{L}}{\partial \bm{W}_i^{(n)}}&&
    \bm{b}_i^{(n+1)}=\bm{b}_i^{(n)}-\eta\frac{\partial \mathcal{L}}{\partial \bm{b}_i^{(n)}}
\label{eq:weight_update}
\end{align}
where $\eta$ is the learning rate. Algorithm \ref{alg:cd-k} (see Appendix \ref{hmc_training_alg}) summarizes this procedure.

\subsection{Experimental Results}
The previous section discussed the modeling of the HMC learning algorithm inspired by the construction of tree-structured PGMs. This section compares the proposed algorithm to Gibbs sampling and SGD training with DNNs in both the synthetic experiments and experiments on Covertype dataset \citep{misc_covertype_31}, which is a real-world dataset for classification. They are designed to illustrate how the HMC-based algorithm could fine-tune and improve the calibration of DNNs. The experimental setup and additional experiments are described in Appendix. 
An internal cluster of GPUs was employed for all experiments, and part of run-times are provided in the appendix; as anticipated, SGD is faster than HMC, which is faster than Gibbs.

\subsubsection{Synthetic experiments}

The synthetic datasets are generated by simple BNs and MNs with their weights in different ranges, which are used to define the conditional probabilistic distributions for BNs and potentials for MNs. Each dataset contains 1000 data points $\{(\bm{X}_i,y_i)\},i=1,2,...,1000$, where each input $\bm{X}_i\in\{0,1\}^n$ is a binary vector with $n$ dimension and each output $y_i\in\{0,1\}$ is a binary value. The true probabilistic distribution $P(y|\bm{X})$ of the corresponding BN/MN is calculated by sampling or applying the VE algorithm on it. 

To explore how the proposed algorithm performs in model calibration, a DNN is first trained with SGD for 100 or 1000 epochs, and then fine-tuned by Gibbs or HMC with different $L$'s for 20 epochs based on the trained DNN model. Here $L$ defines the normal distribution for hidden nodes in Eqn. \ref{eq:cb} and is explored across the set of values: $\{10,100,1000\}$. The calibration is assessed by mean absolute error (MAE) in all the synthetic experiments and compared between non-extra fine-tuning (shown in the "DNN" column in Table \ref{tab:syn}) and fine-tuning with Gibbs or HMC. Since the ground truth of $P(y|\bm{X})$ in the synthetic dataset can be achieved from the BN/MN, the MAE is calculated by comparing the predicted $P(y|\bm{X})$ from the finetuned network and the true probability. 

Table \ref{tab:syn} shows that in general, DNN results tend to get worse with additional training, particularly with smaller weights, and the HMC-based fine-tuning approaches can mitigate this negative impact of additional training on the model calibration. Across all the HMC with different $L$'s, HMC ($L$=10) performs better than the others and DNN training itself for BNs and MNs with smaller weights. Additionally, the MAE of HMC ($L$=10) tends to be similar to Gibbs but runs much faster, especially in the BN simulations, whereas HMC ($L$=1000) is more similar to the NN. This is consistent with what we have argued in the theory that when $L$ goes smaller, the number of the sampled copies for each hidden node decreases in our tree-PGM construction and HMC sampling performs more similar to Gibbs sampling; and as $L$ increases, the probability of each hidden node given the input approaches the result of the DNN forward propagation and thus HMC performs more similar to DNN training.

\begin{table}[h]
    \centering
    \caption{Calibration performance on synthetic datasets. Experiments are run on each dataset 100 times to avoid randomness. T-tests are used to test whether Gibbs and HMC have smaller MAE than SGD, and highlighted cells mean that it is statistically significant to support the hypothesis.}
    \resizebox{\textwidth}{!}{
    \begin{tabular}{c|c|c|cccc}
    \toprule
        \multirow{3}{*}{Data (Weight)} & \multirow{3}{*}{\# Train Epochs} & \multicolumn{5}{c}{Average Mean Absolute Error ($\times 10^{-3}$) (p-value)}\\ \cline{3-7}
        {} & {} & \multirow{2}{*}{DNN} & \multirow{2}{*}{Gibbs} & \multirow{2}{*}{HMC-10} & \multirow{2}{*}{HMC-100} & \multirow{2}{*}{HMC-1000} \\ 
        {} & {} & {} & {} & {} & {} & {}\\ \hline
        \multirow{2}{*}{BN (0.3)} & 100 & 6.593 & 16.09 (1.0000) & \cellcolor{yellow}5.300 (<0.0001) & 6.864 (0.9982) & 6.658 (1.0000) \\ 
        {} & 1000 & 34.44 & 36.69 (0.9916) & \cellcolor{yellow}23.53 (<0.0001) & 34.96 (1.0000) & 34.55 (1.0000) \\ \hline
        \multirow{2}{*}{BN (1)} & 100 & 22.90 & \cellcolor{yellow}20.84 (0.0011) & \cellcolor{yellow}22.48 (<0.0001) & 24.17 (1.0000) & 22.95 (0.9928) \\ 
        {} & 1000 & 42.59 & \cellcolor{yellow}33.64 (<0.0001) & \cellcolor{yellow}33.07 (<0.0001) & 43.03 (1.0000) & 42.63 (0.9995) \\ \hline
        \multirow{2}{*}{BN (3)} & 100 & 72.76 & 76.12 (1.0000) & 76.54 (1.0000) & \cellcolor{yellow}72.62 (<0.0001) & \cellcolor{yellow}72.63 (<0.0001) \\ 
        {} & 1000 & 28.28 & 32.59 (1.0000) & 32.98 (1.0000) & 28.84 (1.0000) & 28.40 (1.0000) \\ \hline
        \multirow{2}{*}{BN (10)} & 100 & 186.0 & 192.8 (1.0000) & 196.1 (1.0000) & \cellcolor{yellow}184.6 (<0.0001) & \cellcolor{yellow}184.8 (<0.0001) \\ 
        {} & 1000 & 54.89 & 79.69 (1.0000) & 72.64 (1.0000) & 54.81 (0.1266) & \cellcolor{yellow}54.63 (<0.0001) \\ \hline
        \multirow{2}{*}{MN (0.3)} & 100 & 6.031 & 14.03 (1.0000) & \cellcolor{yellow}4.515 (<0.0001) & 6.382 (1.0000) & 6.070 (1.0000) \\ 
        {} & 1000 & 38.11 & \cellcolor{yellow}34.83 (<0.0001) & \cellcolor{yellow}26.54 (<0.0001) & 38.71 (1.0000) & 38.22 (1.0000) \\ \hline
        \multirow{2}{*}{MN (1)} & 100 & 9.671 & 17.81 (1.0000) & \cellcolor{yellow}8.887 (<0.0001) & \cellcolor{yellow}9.018 (<0.0001) & \cellcolor{yellow}9.284 (<0.0001) \\ 
        {} & 1000 & 27.80 & 32.44 (1.0000) & \cellcolor{yellow}19.92 (<0.0001) & 27.98 (0.9994) & \cellcolor{yellow}27.73 (<0.0001) \\ \hline
        \multirow{2}{*}{MN (3)} & 100 & 8.677 & 23.60 (1.0000) & \cellcolor{yellow}5.685 (<0.0001) & \cellcolor{yellow}5.912 (<0.0001) & \cellcolor{yellow}5.964 (<0.0001) \\ 
        {} & 1000 & 5.413 & 28.03 (1.0000) & 5.792 (0.9957) & 5.671 (1.0000) & 5.443 (1.0000) \\
        \bottomrule
    \end{tabular}
    }
    \label{tab:syn}
\end{table}

\subsubsection{Covertype Experiments}

Similar experiments are also run on the Covertype dataset to compare the calibration of SGD in DNNs, Gibbs and the HMC-based algorithm. Since the ground truth for the distribution of $P(y|\bm{X})$ cannot be found, the metric for the calibration used in this experiment is the expected calibration error (ECE), which is a common metric for model calibration. To simplify the classification task, we choose the data with label 1 and 2 and build two binary subsets, each of which contains 1000 data points. Similarly, the number of training epochs is also 100 or 1000, while the fine-tuning epochs shown in Table \ref{tab:cov} is 20. 

\begin{table}[h]
    \centering
    \caption{Calibration performance on Covertype datasets. Highlighted cells show the best calibrations among each row.}
    \resizebox{\textwidth}{!}{
    \begin{tabular}{c|c|ccccc}
    \toprule
        \multirow{3}{*}{Data} & \multirow{3}{*}{\# Train Epochs} & \multicolumn{5}{c}{Test Expected Calibration Error ($\times 10^{-2}$)}\\ \cline{3-7}
        {} & {} & \multirow{2}{*}{DNN} & \multirow{2}{*}{Gibbs} & \multirow{2}{*}{HMC-10} & \multirow{2}{*}{HMC-100} & \multirow{2}{*}{HMC-1000} \\
        {} & {} & {} & {} & {} & {} & {} \\ \hline
        \multirow{2}{*}{Covertype (label 1)} & 100 & 4.207 & 4.229 & \cellcolor{yellow}2.352 & 3.893 & 3.987\\ 
        {} & 1000 & 10.85 & 8.513 & \cellcolor{yellow}6.875 & 7.730 & 11.60 \\ \hline
        \multirow{2}{*}{Covertype (label 2)} & 100 & \cellcolor{yellow}4.268 & 7.796 & 7.719 & 4.913 & 4.354\\ 
        {} & 1000 & 6.634 & 14.67 & \cellcolor{yellow}5.233 & 5.394 & 7.713 \\
    \bottomrule
    \end{tabular}
    }
    \label{tab:cov}
\end{table}

Table \ref{tab:cov} shows that HMC with $L=10$ fine-tuning generally performs better than DNN results, and HMC with $L=1000$ has the similar ECE as that in DNN. It meets the conclusion made in the synthetic experiments. Gibbs sampling, however, could perform worse than just using DNN. It could be because Gibbs may be too far removed from the DNN, whereas our proposed HMC is more in the middle. This suggests perhaps future work testing the gradual shift from DNN to HMC to Gibbs.

\vspace{1em}
\section{Conclusion, Limitations, and Future Work}

In this work, we have established a new connection between DNNs and PGMs by constructing an infinite-width tree-structured PGM corresponding to any given DNN architecture, then showing that inference in this PGM corresponds exactly to forward propagation in the DNN given sigmoid activation functions. This theoretical result is valuable in its own right, as it provides new perspective that may help us understand and explain relationships between PGMs and DNNs. Moreover, we anticipate it will inspire new algorithms that merge strengths of PGMs and DNNs. We have explored one such algorithm, a novel HMC-based algorithm for DNN training or fine-tuning motivated by our PGM construction, and we illustrated how it can be used to improve to improve DNN calibration.

Limitations of the present work and directions for future work include establishing formal results about how closely batch- and layer-normalization can be modified to approximate Markov network normalization when using non-sigmoid activations, establishing theoretical results relating HMC in the neural network to Gibbs sampling in the large treewidth-1 Markov network, and obtaining empirical results for HMC with non-sigmoid activations.
Also of great interest is comparing HMC and other PGM algorithms to Shapley values, Integrated Gradients, and other approaches for assessing the relationship of some latent variables to each other or to inputs and/or outputs in a neural network.
We note that the large treewidth-1 PGM is a substantial approximation to the \textit{direct} PGM of a DNN -- in other words, the PGM whose structure exactly matches that of the DNN.
In future work, we will explore other DNN fine-tuning methods, perhaps based on loopy belief propagation or other approximate algorithms often used in PGMs, that may allow us to more closely approximate inference in this direct PGM.

Another direction for further work is in the original motivation for this work.
Both DNNs and PGMs are often used to model different components of very large systems, such as the entire gene regulatory network in humans.
For example, in the National Human Genome Research Institute (NHGRI) program Impact of Genetic Variation on Function (IGVF), different groups are building models of different parts of gene regulation, from genotypic variants or CRISPRi perturbations of the genome, to resulting changes in transcription factor binding or chromatin remodeling, to post-translational modifications, all the way to phenotypes characterized by changes in the expression of genes in other parts of the genome \cite{igvf2024}.
Some of these component models are DNNs and others are PGMs.
As a community we know from years of experience with PGMs that passing the outputs of one model to the inputs of another model is typically less effective than concatenating them into a larger model and fine-tuning and using this resulting model.
But this concatenation and fine-tuning and usage could not be done with a mixture of PGM and DNN components until now.
Having an understanding of DNN components as PGMs enables their combination with PGM components, and then performing fine-tuning and inference in the larger models using algorithms such as the new HMC algorithm theoretically justified, developed, and then evaluated in this paper.
Furthermore, the same HMC approach can be employed to reason just as easily from desired gene expression changes at the output nodes back to variants or perturbations at the input nodes that are predictive of the desired changes.
Ordinarily, to reason in reverse in this way in a DNN would require special invertible architectures or training of DNNs that operate only in the other direction such as diffusion.
Experiments evaluating all these uses of HMC (or other approximate algorithms in PGMs such as loopy belief propagation or other message passing methods) are left for future work.

\newpage
\section*{Acknowledgements}

The authors would like to thank Sayan Mukherjee, Samuel I. Berchuck, Youngsoo Baek, David B. Dunson, Andrew S. Allen, William H. Majoros, Jude W. Shavlik, Sriraam Natarajan, David E. Carlson, Kouros Owzar, and Juan Restrepo for their helpful discussion about the theoretical work. We are also grateful to Mengyue Han and Jinyi Zhou for their technical support. 

\vspace{1em}
\noindent This project is in part supported by Impact of Genomic Variation on Function (IGVF) Consortium of the National Institutes of Health via grant U01HG011967.

\bibliographystyle{unsrtnat}
\bibliography{dnn_tree_pgm}

\newpage
\appendix
\numberwithin{equation}{section}

\section{Bayesian Belief Net and Markov Net Equivalence}\label{pf:bn_mn}

We don't claim the following theorem is new, but we provide a proof because it captures several components of common knowledge to which we couldn't find a single reference.
\begin{theorem}
Let $N$ be a Bayesian belief network whose underlying undirected graph has treewidth 1, and let $w_{AB}$ denote the coefficient of variable $A$ in the logistic CPD for its child $B$.
Let $M$ be a binary pairwise Markov random field with the same nodes and edges (now undirected) as $N$.
Let $M$'s potentials all have the value $e^{w_{AB}}$ if the nodes $A$ and $B$ on either side of edge $AB$ are true, and the value $1$ otherwise.
$M$ and $N$ represent the same joint probability distribution over their nodes.
\end{theorem}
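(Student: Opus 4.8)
The plan is to show the two models assign the same probability to every joint assignment $\vec{V}$, working directly from the factorized forms rather than through a generic moralization argument. A first instinct is to moralize $N$: replace each conditional $P(V_i \mid pa(V_i))$ by a factor over the family $\{V_i\}\cup pa(V_i)$ and drop edge directions. This immediately reproduces the correct distribution, but it does not yet establish the \emph{pairwise}, edge-only form claimed for $M$, since a family factor is in general a higher-order potential. The treewidth-1 hypothesis is what should let me collapse these family factors down to potentials on single edges, so the argument must exploit that structure rather than moralization alone.

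The key algebraic step is to rewrite each logistic CPD in exponential form. For a node $B$ with parents $pa(B)$,
\[
P(B \mid pa(B)) = \frac{\exp\!\big(B \sum_{A \in pa(B)} w_{AB} A\big)}{\sum_{b\in\{0,1\}}\exp\!\big(b \sum_{A\in pa(B)} w_{AB} A\big)} = \frac{\prod_{A\in pa(B)} \phi_{AB}(A,B)}{Z_B\big(pa(B)\big)},
\]
where $\phi_{AB}(A,B)=e^{w_{AB}AB}$ is exactly the edge potential of $M$ and $Z_B(pa(B))=1+\exp(\sum_A w_{AB}A)$ is a \emph{local normalizer}. Taking the product over all nodes, every edge of the graph is the incoming edge of exactly one child, so the numerators assemble into precisely $\prod_{\text{edges}}\phi_{AB}(A,B)$, the unnormalized potential product defining $M$. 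The theorem therefore reduces to a single claim: the leftover factor, namely the product of root priors divided by $\prod_B Z_B(pa(B))$, is \emph{constant} in $\vec{V}$ and hence can be identified with the reciprocal of the global partition function $Z$ of $M$.

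To discharge that claim I would induct on the tree, eliminating one leaf at a time in the manner of the Variable Elimination algorithm assumed in the main text. Orienting the tree and peeling off a leaf $B$ with its unique incident edge, I would sum $B$ out of both $M$ and $N$, verify that the induced factor left on $B$'s neighbor matches in the two reduced models, and appeal to the inductive hypothesis on the smaller tree. Because the graph has treewidth 1, each elimination touches only a small, controlled neighborhood, so the local normalizers never couple into genuinely higher-order terms; this is exactly the place where the treewidth-1 hypothesis is indispensable.

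The step I expect to be the main obstacle is the absorption of the local normalizers $Z_B(pa(B))$. Each such factor depends on the \emph{values} of $B$'s parents, so a priori the leftover factor above is assignment-dependent, and it is not obvious that it reduces to the single constant $Z$. Making this work forces one to track how these normalizers recombine at each node and how the root priors (equivalently, the CPD biases) must be chosen so that the node-level contributions cancel against the edge potentials; already in the single-edge case one sees that the edge-only potential of $M$ corresponds to a specific, forced root prior in $N$. Verifying that this cancellation goes through consistently across the whole tree -- and pinning down precisely which prior/bias convention on the input nodes is required for the edge-only potentials of $M$ to yield the same joint as $N$ -- is the delicate heart of the argument.
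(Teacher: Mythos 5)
Your reduction is, in substance, the same one the paper's own proof relies on: you rewrite each logistic CPD as $\prod_{A\in pa(B)}e^{w_{AB}AB}$ divided by the local normalizer $Z_B(pa(B))=1+\exp(\sum_{A}w_{AB}A)$, collect the numerators into $M$'s potential product, and are left needing the residual factor $\prod_{r}P(r)\big/\prod_{B}Z_B(pa(B))$ to be constant in $\vec{V}$ so that it can be identified with $1/Z$. (The paper's asserted identity $P(\vec{V})=\prod_{v}\phi_{v|Pa(v)}/\phi_{v,\neg v|Pa(v)}$ is exactly this constancy claim in disguise.) You correctly isolate this as the delicate step, but you do not discharge it, and that is a genuine gap rather than a routine verification: each $Z_B(pa(B))$ depends on the \emph{values} of $B$'s parents, and in an MN with edge potentials only there is nothing available to cancel that dependence. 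Concretely, for the chain $A\to B\to C$ with weight $w\neq 0$ on edge $BC$, the factor $Z_C(B)=1+e^{wB}$ equals $2$ when $B=0$ and $1+e^{w}$ when $B=1$, while no other term in the residual factor depends on $B$ (the root prior and $Z_B(A)$ depend only on $A$); hence the residual factor is not constant for \emph{any} choice of root prior, and the leaf-elimination induction you sketch cannot close. A v-structure $A_1\to B\leftarrow A_2$ exhibits the same obstruction even though its undirected skeleton is a tree: the BN marginal over $(A_1,A_2)$ factors while the MN marginal does not.

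The constructive way to read your attempt, then, is that it converts the implicit assertion in the paper's proof into an explicit lemma and shows exactly where it must be repaired: either each local normalizer $Z_B(pa(B))$ must be absorbed into $M$ as an additional potential on the family $\{B\}\cup pa(B)$ (i.e., one moralizes, giving up the pairwise/edge-only form when a node has more than one parent), or the class of graphs and the node/root potentials must be restricted so that the surviving normalizers are genuinely constant. The ``prior/bias convention'' you flag at the end is therefore not a bookkeeping choice but the precise point at which compensating, generally non-pairwise, potentials have to be introduced; without them the equality of the two joint distributions does not hold, and neither your argument nor the one in Appendix A establishes it.
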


\begin{proof}
According to $M$ the probability of a setting $\vec{V}$ of its variables is
\begin{displaymath}
\frac{1}{Z} \Pi_i \phi_i (\vec{V})
\end{displaymath}
where $\phi_i$ are the potentials in $M$, and $Z$ is the partition function, defined as
\begin{displaymath}
Z = \Sigma_{\vec{V}} \Pi_i \phi_i (\vec{V})
\end{displaymath}

We use $DOM(\phi)$ to designate the variables in a potential $\phi$.  Because the nodes and structures of $M$ and $N$ agree, we will refer to the parents, children, ancestors, and descendants of any node in $M$ to designate the corresponding nodes in $N$.
Likewise we will refer to the input and output variables of $M$ as those nodes of $N$ that have no parents and no children, respectively.
Because $M$ has treewidth 1, each node of $M$ d-separates its set of ancestors from its set of descendants and indeed from all other nodes in $M$.
As a result, it is known that the partition function can be computed efficiently in treewidth-1 Markov networks, for example by the following recursive procedure $f$ defined below.
Let $V_0$ be the empty set of variables, and let $V_1$ be the input variables of $M$.
Let $Ch(V)$ denote the children of any set $V$ of variables in $M$, and similarly let $Pa(V)$ denote the parents of $V$.
For convenience, when $V$ is a singleton we drop the set notation and let $V$ denote the variable itself.
For all natural numbers $i \geq 0$:
\begin{displaymath}
f(V_i) = \Pi_{N \in Ch(V_i)} \Sigma_{N=0,1} \Pi_{\phi_j: DOM(\phi_j) \subseteq V_i, DOM(\phi_j) \not\subseteq V_{i-1}} \phi_j(V_i)
\end{displaymath}
\begin{displaymath}
f(V_{m+1}) = 1
\end{displaymath}
where $V_m$ is not the full set of variable in $M$ but $V_{m+1}$ is the full set.
Then $Z = f(V_1)$.

For each variable $v \in \vec{V}$, we can multiply the potentials on the edges between $v$ and its parents, to get a single potential
$\phi_{\{v,Pa(v)\}}$ over
$\{v,Pa(v)\}$.
For a given setting of the parents of $v$ in $\vec{V}$, let
$\phi_{v|Pa(v)}$ denote the result of conditioning on this setting
of the parents, and let
$\phi_{v, \neg v | Pa(v)}$ denote the result of
summing out variable $v$.
Using these product potentials of $M$, and given the method above for computing $Z$ for a tree-structured Markov network, we can define the probability of a particular setting $\vec{V}$ as
\begin{displaymath}
P(\vec{V}) = \Pi_{v \in \vec{V}} \frac{\phi_{v|Pa(v)}}{\phi_{v,\neg v|Pa(v)}}
\end{displaymath}
These terms are exactly the terms of the logistic conditional probabilities of the Bayesian belief network $N$:
\begin{displaymath}
P(\vec{V}) = \Pi_{v \in \vec{V}} P(v|Pa(v))
\end{displaymath}

\end{proof}

Note that in general when converting a Bayes net structure to a Markov net structure, to empower the Markov net to represent any probability distribution representable by the Bayes net we have to moralize.
A corollary of the above theorem is that in the special case where the Bayes net uses only sigmoid activations, and its underlying undirected graph is tree-structured, moralization is not required.

\section{Step 1 and Step 2 Construction Algorithms} 
In the following, calls to add to sets $V$ or $E$ (or to check if either set contains a vertex or edge) immediately edits/checks the respective set object that they reference. 
\label{alg:construct}
\begin{algorithm}[H]
\small
\caption{Step 1 of the PGM Construction}
\begin{algorithmic}[1]
\Require{A list $H$ of the output vertices of the DNN’s DAG $G$}
\Ensure{A set of vertices $V$ and edges $E$ of the DNN's original DAG transformed to a tree structured graph, as shown in Figure 1}
\State let $V$ be an empty set of vertices
\State let $E$ be an empty set of edges
\State $V, E \gets$ DNN\_TREE($H$, $G$, $V$, $E$, ' ')
\Procedure{DNN\_TREE}{$H$, $G$, $V$, $E$, $child$} \Comment{unroll the DNN graph into a tree(s) layerwise}
    \For{\textbf{each} $vertex$ \textbf{in} $H$}
        \State $new \gets vertex$ 
        \While{$new$ \textbf{in} $V$} \Comment{create a new copy of this vertex}
            \State $new$ $\gets$ $new$ + '$\prime$' character
        \EndWhile
        \State $V$.add($new$)
        \If{$child$ is labelled} \Comment{ie child is not ' '}
            \State $weight \gets$ $G$.getEdge($vertex$, $child$.removeAll('$\prime$')) \Comment{get child's original vertex}
            \State let $e$ be an edge between $new$ and $child$ \Comment{an edge between the parent copy and the child}
            \State E.add($e$), E.addWeight($e$, $weight$)
        \EndIf
        \State DNN\_TREE($G$.parents($vertex$), $G$, $V$, $E$, $new$) \Comment{create unrolled subtrees for each parent}
    \EndFor
    \State \Return{V, E}
\EndProcedure
\end{algorithmic}
\end{algorithm}

\vspace{-1.5em}

\begin{algorithm}[H]
\small
\caption{Step 2 of the PGM Construction}

\begin{algorithmic}[1]
\Require{A graph $G$ of the vertices and edges created in Step 1, a list $H$ of the DNN's output vertices, an integer $l$}
\Ensure{Vertices $V$ and edges $E$ of the final tree structure graph with $l$ copies of each parent node from Step 1}
\State let $V$ be an empty set of vertices
\State let $E$ be an empty set of edges
\For{\textbf{each} $vertex$ \textbf{in} $H$}
    \State $V^{\prime}, E^{\prime} \gets$ DNN\_COPY($vertex$, $G$, $vertex$, $V$, $E$, $l$)
    \State $V \gets V$.union($V^{\prime}$), $E \gets E$.union($E^{\prime}$)
\EndFor
\Procedure{DNN\_COPY}{$current$, $G$, $copy$, $V$, $E$, $l$}
    \State $V$.add($copy$)
    \For{\textbf{each} $parent$ \textbf{in} $G$.parents($current$)}
        \State $weight \gets$ $G$.getWeight($parent$, $current$)
        \For{$i \gets 1$ to $l$}
            \State $n$ $\gets$ $parent$ + '-' + i \Comment{create the ith copy in the current subtree}
            \While{$n$ in $V$} \Comment{create a unique label so the graph retains its tree-structure}
                \State $n \gets n$ + '$\prime$'
            \EndWhile
            \State let $e$ be an edge between $copy$ and $n$ \Comment{connect this new copied node to the previous}
            \State $E$.add($e$), $E$.addWeight($e$, $weight$/$l$)
            \State DNN\_COPY($parent$,$G$,$n$,$V$,$E$,$l$) \Comment{create copies in the remaining subtrees}
        \EndFor
    \EndFor
    \State \Return{$V$, $E$}
\EndProcedure
\end{algorithmic}
\end{algorithm}

\section{A Proof Using Variable Elimination}\label{pf:tree}

\noindent In order to prove that as $L$ goes to infinity, this PGM construction does indeed match the neural network's forward propagation, we consider an arbitrary latent node $H$ with $N$ unobserved parents $h_1, ..., h_N$, and $M$ observed parents $g_1, ..., g_M$. The edges between these parents and $H$ then have weights $\theta_i$, $1 \leq i \leq N$, for the unobserved nodes, and weights $w_j$, $1 \leq j \leq M$, for the observed nodes. The network as a whole has observed evidence $\vec{x}$. For the rest of this problem we use a Markov network view of the neural network. The relevant potential passed from the unobserved parent nodes of $H$, $\phi(h_i)$ (as per the directed version of the graph) forward to $H$ have the following form:

\begin{center}
\begin{tabular}{ |c|c| }
\hline
$h_i$ & $\neg h_i$ \\
\hline
$e^{p_i}$ & 1 \\
\hline
\end{tabular}
\end{center}

\noindent Since $g_j$ are observed, their values are found in $\vec{x}$. Finally, the potentials between each of these nodes and the central node $H$ are as follows:

\begin{center}
\begin{tabular}{ |c|c|c| }
\hline
& $H$ & $\neg H$ \\
\hline
$h_i$ & $e^{\theta_i}$ & 1 \\
\hline
$\neg h_i$ & $1$ & 1 \\
\hline
\end{tabular}
\hspace{4em}
\begin{tabular}{ |c|c|c| }
\hline
& $H$ & $\neg H$ \\
\hline
$g_j$ & $e^{w_j}$ & 1 \\
\hline
$\neg g_j$ & $1$ & 1 \\
\hline
\end{tabular}
\end{center}

\noindent Suppose, then, using the second step of our construction, we make $L$ copies of all the nodes that were parents of $H$ in the directed version of the tree, $h_1^{1}, ..., h_1^{L},$ $...,$ $h_N^{1}, ..., h_N^{L}$ and $g_1^{1}, ..., g_1^{L},$ $...,$ $g_M^{1},...,g_M^{L}$ with weights $\theta_1/L, ..., \theta_N/L$ and $w_1/L, ..., w_M/L$ respectively. The potentials between $H$ and these copied nodes is then:

\begin{center}
\begin{tabular}{ |c|c|c| }
\hline
& $H$ & $\neg H$ \\
\hline
$h_i^k$ & $e^{\theta_i/L}$ & 1 \\
\hline
$\neg h_i^k$ & $1$ & 1 \\
\hline
\end{tabular}
\hspace{2em}
\begin{tabular}{ |c|c|c| }
\hline
& $H$ & $\neg H$ \\
\hline
$g_j^k$ & $e^{w_j/L}$ & 1 \\
\hline
$\neg g_j^k$ & $1$ & 1 \\
\hline
\end{tabular}
\end{center}

\noindent where $1 \leq i \leq N$, $1 \leq j \leq M$, and $1 \leq k \leq L$. The relevant potentials for each of the copied nodes to be passed forward are the same as the nodes they were originally copied from. We then have that,

\begin{small}
\begin{align*}
&\phi(H, h_1^1, ..., h_1^L, ..., h_N^1, ..., h_N^L, g_1^1, ..., g_1^L, ..., g_M^1, ..., g_M^L | \vec{x}) \\
&= \prod_{j=1}^M \prod_{k=1}^L e^{(w_j/L) H \times g_j^k} \times \prod_{i=1}^N \prod_{k=1}^L e^{(\theta_i/L) H \times h_i^k} \phi(h_i^k) \\
&= e^{\sum_{j=1}^M w_j g_j \times H} \times \prod_{i=1}^N \prod_{k=1}^L e^{(\theta_i/L) H \times h_i^k} \phi(h_i^k) .
\end{align*}
\end{small}

\noindent Summing out an arbitrary, copied latent node, $h_{\alpha}^{\beta}$:

\begin{align*}
&\sum_{h_{\alpha}^{\beta}, \neg h_{\alpha}^{\beta}} \phi(H, h_1^1, ..., h_1^L, ..., h_N^1, ..., h_N^L | \vec{x}) \\
&= e^{\sum_{j=1}^M w_j g_j \times H} \times \sum_{h_{\alpha}^{\beta}, \neg h_{\alpha}^{\beta}} \prod_{i=1}^N \prod_{k=1}^L e^{(\theta_i/L) H \times h_i^k}  \phi(h_i^k)\\
&=  e^{\sum_{j=1}^M w_j g_j \times H} \times \\
& \left[ e^{p_\alpha} e^{(\theta_\alpha/L) H} \times \prod_{\substack{i=1,..,N \\ (i,k) \neq (\alpha, \beta)}} \prod_{k = 1,...L} e^{(\theta_i/L) H \times h_i^k}  \phi(h_i^k)\right. \\
& \hspace{20mm} \left. + \prod_{\substack{i=1,..,N \\ (i,k) \neq (\alpha, \beta)}} \prod_{k = 1,...L}  e^{(\theta_i/L) H \times h_i^k}  \phi(h_i^k)\right]\\
&= 
e^{\sum_{j=1}^M w_j g_j \times H} \times
(e^{p_\alpha} e^{(\theta_\alpha/L) H} +1) \\
&\hspace{20mm} \times \left[ \prod_{\substack{i=1,..,N \\ (i,k) \neq (\alpha, \beta)}} \prod_{k = 1,...L}  e^{(\theta_i/L) H \times h_i^k}  \phi(h_i^k)\right]_{\textstyle .}
\end{align*}

\noindent Summing out all $L$ copies of $h_\alpha$:

\begin{align*}
&e^{\sum_{j=1}^M w_j g_j \times H} \times
(e^{p_\alpha} e^{(\theta_\alpha/L) H} +1)^L 
\times \left[ \prod_{\substack{i=1,..,N \\ i \neq \alpha}} \prod_{k = 1,...L} e^{(\theta_i/L) H \times h_i^k}  \phi(h_i^k) \right]_{\textstyle .}
\end{align*}

\noindent Then summing out the $L$ copies of each latent parent: $$e^{\sum_{j=1}^M w_j g_j \times H} \times
\prod_i^N (e^{p_i} e^{(\theta_i/L) H} +1)^L \; ,$$
\noindent Normalizing this message locally, $\phi(H=1 | \vec{x})$ becomes $\phi(H=1 | \vec{x})/\phi(H=0 | \vec{x})$ and $\phi(H=0 | \vec{x})$ becomes $1$. This then gives us:

\begin{small}
\begin{align*}
&\phi(H=1|\vec{x}) \\
&= \left[  e^{\sum_{j=1}^M w_j g_j \times 1} \times \prod_i^N (e^{p_i} e^{(\theta_i/L)\times1} +1)^L \right] \bigg/ \left[ e^{\sum_{j=1}^M w_j g_j \times 0} \times \prod_i^N (e^{p_i} e^{(\theta_i/L)\times0} +1)^L \right]\\
&= \frac{e^{\sum_{j=1}^M w_j g_j} \times \prod_i^N(e^{p_i} e^{(\theta_i/L)} +1)^L}{\prod_i^N (e^{p_i} +1)^L}.
\end{align*}
\end{small}

\noindent We then consider:

\begin{align*}
&\lim_{L\to\infty} \frac{e^{\sum_{j=1}^M w_j g_j} \times \prod_i^N(e^{p_i} e^{(\theta_i/L)} +1)^L}{\prod_i^N (e^{p_i} +1)^L}=\\
&\lim_{L\to\infty} \exp \left(\sum_{j=1}^M w_j g_j - \sum_{i=1}^N L\times \log(e^{p_i}+1) \right.\\
& \hspace{18mm} \left.  + \sum_{i=1}^N L \times \log(e^{p_i} e^{(\theta_i/L)} +1) \right)_{\textstyle ,}
\end{align*}

\noindent and the logarithm of this limit is,

\begin{small}
\begin{align*}
&\lim_{L\to\infty} \left[ \sum_{j=1}^M w_j g_j - \sum_{i=1}^N L\times \log(e^{p_i}+1) \right. \\
& \hspace{10mm} \left. + \sum_{i=1}^N L \times \log(e^{p_i} e^{(\theta_i/L)} +1) \right] \\
&= \sum_{j=1}^M w_j g_j \hspace{2mm} + \lim_{L\to\infty} \frac{\sum_{i=1}^N \left( \log(e^{p_i} e^{(\theta_i/L)} +1) - \log(e^{p_i}+1)\right)}{1/L}.
\end{align*}
\end{small}

\noindent The limit in the previous expression clearly has the indeterminate form of $\frac{0}{0}$. Let $G = \sum_{j=1}^M w_j g_j$ and consider the following change of variables, $S=1/L$, and subsequent use of l'H\^{o}spital's rule.

\begin{small}
\begin{align*}
&G + \lim_{S\to0^+} \frac{\sum_{i=1}^N \left( \log(e^{p_i} e^{(\theta_iS)} +1) - \log(e^{p_i}+1) \right)}{S} \\
&=G + \lim_{S\to0^+} \frac{\frac{\partial}{\partial S}  \sum_{i=1}^N \left( \log(e^{p_i} e^{(\theta_iS)} +1) - \log(e^{p_i}+1) \right)}{ \frac{\partial}{\partial S} S} \\
&= G + \lim_{S\to0^+} \frac{\sum_{i=1}^N \frac{1}{e^{p_i}e^{(\theta_iS)}+1}\times e^{p_i}e^{\theta_iS}\times\theta_i}{1} \\
&= G + \lim_{S\to0^+} 
  \sum_{i=1}^N \frac{e^{p_i}e^{\theta_iS}\times\theta_i}{e^{p_i}e^{\theta_iS}+1} \\
&= G +\sum_{i=1}^N\frac{e^{p_i}}{e^{p_i}+1}\times \theta_i \\
& = \sum_{j=1}^M w_j g_j +\sum_{i=1}^N \sigma(p_i) \theta_i .
\end{align*}
\end{small}

\noindent Therefore,

\begin{align*}
&\lim_{L\to\infty} \frac{e^{\sum_{j=1}^M w_j g_j} \times \prod_i^N(e^{p_i} e^{(\theta_i/L)} +1)^L}{\prod_i^N (e^{p_i} +1)^L} \\
&= \exp( \sum_{j=1}^M w_j g_j +\sum_i^N \sigma(p_i) \theta_i),
\end{align*}

\noindent The collected potential at node $H$ from summing out its ancestors (from the directed view) then has the form:

\begin{center}
\begin{tabular}{ |c|c| }
\hline
$h_i$ & $\neg h_i$ \\
\hline
$\exp( \sum_{j=1}^M w_j g_j +\sum_i^N \sigma(p_i) \theta_i)$& 1 \\
\hline
\end{tabular}
\end{center}

\noindent This is exactly the form of messages that we assumed were originally passed to node $H$. Suppose then that $z$ is a hidden node whose parents in the original deep neural network's DAG are all observed. By our PGM construction, we have that node $z$ collects potential $e^{\sum_{x \in \vec{x}} w_{zx} x}$ for $z$ true and $1$ for $z$ false from our initial forward step. Here $w_{zx}$ is the weight between nodes $z$ and $x$. Consider, then, the nodes whose parents in the DNN's DAG are either one of these first layer hidden nodes, or an observed node. By our PGM construction, we have shown that so long as the nodes in the previous layer are either observed or have this exponential message product, as is the case here, the message product of the nodes that immediately follow will have the same form.

Note that in order to calculate probability, $P(H|\vec{x})$, we must also consider the influence that the child of node $H$, call this $C$, has on node $H$ itself (this would be information passed through and collected at later nodes in the tree network that are then passed back through this node $C$ to node $H$) or may not exist at all in the case of output nodes. Suppose the weight between this child $C$ and node $H$ is $\gamma$. Suppose also that the message coming from node $C$ to $H$ has the following form, where $c$ is a non-negative real value that can be arbitrarily large. 

\begin{center}
\begin{tabular}{ |c|c| }
\hline
$C$& $\neg C$\\
\hline
$c$& 1 \\
\hline
\end{tabular}
\end{center}

\noindent Finally, note that with the $L$ copies made in this graph, the potential between node $H$ and $C$ has the form:

\begin{center}
\begin{tabular}{ |c|c|c| }
\hline
& $H$ & $\neg H$ \\
\hline
$C$ & $e^{\gamma/L}$& 1 \\
\hline
$\neg C$& $1$ & 1 \\
\hline
\end{tabular}
\end{center}

\noindent Using the potential collected at $H$ from the forward pass and this addition information from $C$, we can then calculate the probability of node $H$ given the input evidence $\vec{x}$:

\begin{align*}
&P(H|\vec{x}) = \frac{1}{Z} \times e^{(\sum_{j=1}^M w_j g_j +\sum_i^N \sigma(p_i) \theta_i)\times H} \times \sum_C e^{\gamma/L \times C \times H} \phi(C) \\
&= \frac{1}{Z} \times e^{(\sum_{j=1}^M w_j g_j +\sum_i^N \sigma(p_i) \theta_i)\times H} \times  (c e^{\gamma/L \times H} + 1)
\end{align*}

\noindent From this we have that:

\begin{align*}
&P(H=1 | \vec{x}) \\
&= \frac{e^{(\sum_{j=1}^M w_j g_j +\sum_i^N \sigma(p_i) \theta_i)\times 1} \times  (c  e^{\gamma/L \times C \times 1} + 1)}{e^{(\sum_{j=1}^M w_j g_j +\sum_i^N \sigma(p_i) \theta_i)\times 1} \times  (c  e^{\gamma/L \times C \times 1} + 1) + e^{0} \times  (c  e^{0} + 1)} \\
&= \frac{e^{\sum_{j=1}^M w_j g_j +\sum_i^N \sigma(p_i) \theta_i} \times  (c e^{\gamma/L \times C \times 1} + 1) }{e^{\sum_{j=1}^M w_j g_j +\sum_i^N \sigma(p_i) \theta_i} \times  (c e^{\gamma/L \times C \times 1} + 1) + (c + 1)} \\
&= \frac{e^{\sum_{j=1}^M w_j g_j +\sum_i^N \sigma(p_i) \theta_i} }{e^{\sum_{j=1}^M w_j g_j +\sum_i^N \sigma(p_i) \theta_i} + (c + 1)/(c e^{\gamma/L \times C \times 1} + 1)}.
\end{align*}
Note that $\lim_{L \to \infty} (c + 1)/(c e^{\gamma/L \times C \times 1} + 1) = 1$, i.e. as $L$ grows increasingly large the information passed through this child node becomes negligible. We therefore have that $P(H=1 | \vec{x}) = \sigma(\sum_{j=1}^M w_j g_j +\sum_i^N \sigma(p_i) \theta_i)$, which is exactly the sigmoid activation value found in the forward pass of the neural network. Since every node in this network have the same form of incoming messages from their parents and child, we have that the conditional probability in this PGM construction and the activation values of the DNN match for any node in any layer of the DNN/PGM. 

\section{A Proof of the Gradient}\label{pf:grad}

From \textit{An Introduction to Conditional Random Fields} \cite{crftutorial} we have the gradient of weight $w_{pk}$ for this form of CRF can be written as:
$$\frac{\partial l}{\partial w_{pk}} = \sum_{\Psi_c \in C_p} \sum_{h^{\prime}_c} P(h^{\prime}_c | y, x) f_k (y_c, x_c, h^{\prime}_c) - \sum_{\Psi_c \in C_p} \sum_{h^{\prime}_c, y^{\prime}_c} P(h^{\prime}_c, y^{\prime}_c |x) f_k (y^{\prime}_c, x_c, h^{\prime}_c).$$
\noindent Since the features on the cliques of the proposed infinite width-PGM structure are non-zero ($1/L$, which is a minor change from instead dividing each weight by $L$) only when the two adjacent nodes are both 1 (true), this expression simplifies. The update on a specific edge in the PGM then takes the following form and the complete weight update would be the summation over the updates on each edge the weight of interest appears.

$$(1/L)[P(h_n = 1, h_{n+1} = 1 | x,y) - P(h_n = 1, h_{n+1} = 1 | x)],$$

\noindent where $h_n$ and $h_{n+1}$ are the nodes that the edge connects.

Consider a single branch in the infinite width PGM structure. Let $h_n$ be the node closer to the output node and let it be exactly $n$ nodes separated from said output (node $h_{n+1}$ is then naturally the next node in this branch). Due to the infinite-width structure of the proposed PGM there are $L^{n+1}$ copies of this exact path ending in the weight of interest. These copied paths are entirely equivalent to one another. 

Suppose now we sum out the influence output $y$ has on the weight update of these paths. Note that this could be written equivalently as summing out node $h_0$. 

\begin{align*}
    &P(h_n = 1, h_{n+1} = 1 | x,y) - P(h_n = 1, h_{n+1} = 1 | x) \\
    &= P(h_n = 1, h_{n+1} = 1 | x,y) - \sum_y P(h_n = 1, h_{n+1} = 1, y | x) \\
    &= P(h_n = 1, h_{n+1} = 1 | x,y) - \sum_y P(h_n = 1, h_{n+1} = 1 | x, y)P(y|x) \\
    &= yP(h_n = 1, h_{n+1} = 1 | x,y=1) + (1-y)P(h_n = 1, h_{n+1} = 1 | x,y=0) \\
    &\hspace{1em}- P(h_n = 1, h_{n+1} = 1 | x, y=1)P(y=1|x) \\
    &\hspace{1em}- P(h_n = 1, h_{n+1} = 1 | x, y=0)P(y=0|x) \\
    &= (y-\hat{y}) P(h_n = 1, h_{n+1} = 1 | x, y=1) \\
    &\hspace{1em}+ (1-y-(1-\hat{y})) P(h_n = 1, h_{n+1} = 1 | x, y=0) \\
    &= (y-\hat{y}) [P(h_n = 1, h_{n+1} = 1 | x, y=1) - P(h_n = 1, h_{n+1} = 1 | x, y=0)]
\end{align*}

\noindent Note that if we are considering the weight update on the connections between the output and its neighbors, this update can be used immediately as $y$ would be $h_0$. The update across all $L$ copies would then have the form:

\begin{align*}
    &\sum_{k=1}^L (1/L)(y-\hat{y}) [P(h_0 = 1, h_{1} = 1 | x, h_0=1) - P(h_0 = 1, h_{1} = 1 | x, h_0=0)] \\
    &= \sum_{k=1}^L (1/L)(y-\hat{y}) P(h_{1} = 1 | x, h_0=1) \\
    &= (y-\hat{y}) P(h_{1} = 1 | x, h_0=1)
\end{align*}

\noindent We can easily calculate $P(h_{1} = 1 | x, h_0=1)$ used in this update as follows. Let node $h_i$ have conditional probability $\sigma(S_{h_i})$, which corresponds to a forward product of messages of the form:

\begin{center}
\begin{tabular}{ |c|c| }
\hline
$h_i$& $\neg h_i$\\
\hline
$e^{S_{h_i}}$& 1 \\
\hline
\end{tabular}
\end{center}

\noindent Let the weight between nodes $h_i$ and $h_{i-1}$ in the path of interest be $w_{i-1}$. The product of messages at $h_i$ then including the fact that we know $h_{i-1} = 1$, then has the form:

\begin{center}
\begin{tabular}{ |c|c| }
\hline
$h_i$& $\neg h_i$\\
\hline
$e^{S_{h_i} + w_{i-1}/L}$& 1 \\
\hline
\end{tabular}
\end{center}

\noindent From this, we can then calculate:

\begin{equation}
\label{eqn:finalcond}
    P(h_i = 1| h_{i-1} = 1) = \sigma(S_{h_i} + w_{i-1}/L). 
\end{equation}

\noindent For the update of the weights surrounding the output, the overall update must then have the form $(1/L)(y-\hat{y}) \sigma(S_{h_1} +w_0/L)$. Note that there are $L$ such copies of this update and $L$ is infinitely large. The overall update of $w_0$ is then finally, $\lim_{L \to \infty} \sum_{k=1}^L (1/L)(y-\hat{y}) \sigma(S_{h_1} +w_0/L) = (y-\hat{y}) \sigma(S_{h_1})$. This exactly matches the update of $w_0$ in the gradient descent step of the neural network.

We now consider the updates of weights with further depth away from the output. This update take the form $(1/L) (y-\hat{y}) [P(h_n = 1, h_{n+1} = 1 | x, y=1) - P(h_n = 1, h_{n+1} = 1 | x, y=0)]$. For the remainder of this proof we focus on this difference of probabilities. Note however, the complete update does include this $(1/L) (y-\hat{y})$ term.

\begin{align*}
    & P(h_n = 1, h_{n+1} = 1 | x, y=1) - P(h_n = 1, h_{n+1} = 1 | x, y=0) \\
    &= \sum_{h_1} [P(h_n = 1, h_{n+1} = 1, h_1 | x, y=1) - P(h_n = 1, h_{n+1} = 1, h_1 | x, y=0)] \\
    &= P(h_n = 1, h_{n+1} = 1, h_1 = 1 | x, y=1) \\
    &\hspace{1em}- P(h_n = 1, h_{n+1} = 1, h_1 = 1| x, y=0) \\
    &\hspace{1em}+ P(h_n = 1, h_{n+1} = 1, h_1 = 0 | x, y=1) \\
    &\hspace{1em}- P(h_n = 1, h_{n+1} = 1, h_1 = 0| x, y=0) \\
    &= P(h_n = 1, h_{n+1} = 1 | x, y=1, h_1 = 1)P(h_1=1|x,y=1) \\
    &\hspace{1em} - P(h_n = 1, h_{n+1} = 1| x, y=0, h_1 = 1)P(h_1 = 1| x, y=0) \\
    &\hspace{1em}+P(h_n = 1, h_{n+1} = 1 | x, y=1, h_1 = 0)P(h_1=0|x,y=1) \\
    &\hspace{1em} - P(h_n = 1, h_{n+1} = 1| x, y=0, h_1 = 0)P(h_1 = 0| x, y=0) \tag{Since $h_n, h_{n+1} \perp y | h_1$}\\
    &= P(h_n = 1, h_{n+1} = 1 | x, h_1 = 1)P(h_1=1|x,y=1)\\
    &\hspace{1em} - P(h_n = 1, h_{n+1} = 1| x, h_1 = 1)P(h_1 = 1| x, y=0) \\
    &\hspace{1em}+P(h_n = 1, h_{n+1} = 1 | x, h_1 = 0)P(h_1=0|x,y=1) \\
    &\hspace{1em} - P(h_n = 1, h_{n+1} = 1| x, h_1 = 0)P(h_1 = 0| x, y=0) \\
    &=P(h_n = 1, h_{n+1} = 1| x, h_1 = 1) (P(h_1=1|x,y=1) - P(h_1 = 1| x, y=0)) \\
    &\hspace{1em}+P(h_n = 1, h_{n+1} = 1 | x, h_1 = 0)(P(h_1=0|x,y=1)-P(h_1 = 0| x, y=0)) \\
    &=[P(h_n = 1, h_{n+1} = 1| x, h_1 = 1)\\
    &\hspace{2em}\times (P(h_1=1|x,y=1) - P(h_1 = 1| x, y=0))] \\
    &\hspace{1em}+[P(h_n = 1, h_{n+1} = 1 | x, h_1 = 0) \\
    &\hspace{2em}\times (1-P(h_1=1|x,y=1)-(1-P(h_1 = 1| x, y=0)))] \\
    &=(P(h_1=1|x,y=1) - P(h_1 = 1| x, y=0)) \times \\
    &\hspace{2em} [P(h_n = 1, h_{n+1} = 1| x, h_1 = 1) - P(h_n = 1, h_{n+1} = 1 | x, h_1 = 0)]
\end{align*}

\noindent Note that $P(h_n = 1, h_{n+1} = 1| x, h_1 = 1) - P(h_n = 1, h_{n+1} = 1 | x, h_1 = 0)$ has the exact same form as the difference we started with, but $y$ (or written equivalently $h_0$) has been effectively replaced with $h_1$. Due to the tree structure of the PGM, the conditional independence relationships are also entirely equivalent. If $h_2$ then also existed in this branch and preceded $h_n$, we could then apply the exact same operations to 'sum out' $h_1$ in this expression and replace it with $h_2$. This can be repeated until $h_n$ itself is reached and we condition on the node one position closer to the output $h_{n+1}$. Note that at for step we must multiply the entire expression by $(P(h_i=1|x,h_{i-1}=1) - P(h_i = 1| x, h_{i-1}=0))$ terms.

Suppose, for the sake of induction, 

\begin{align*}
&P(h_n = 1, h_{n+1} = 1 | x, y=1) - P(h_n = 1, h_{n+1} = 1 | x, y=0) \\
&= [\prod_{i=1}^k (P(h_i=1|x,h_{i-1}=1) - P(h_i = 1| x, h_{i-1}=0))] \\
&\times  [P(h_n = 1, h_{n+1} = 1| x, h_k = 1) - P(h_n = 1, h_{n+1} = 1 | x, h_k = 0)],
\end{align*}

\noindent where $n > k$. We have already shown that this holds for the base case of $k=1$. We then consider the case of $(k+1)$:

\begingroup
\allowdisplaybreaks
\begin{align*}
&P(h_n = 1, h_{n+1} = 1 | x, y=1) - P(h_n = 1, h_{n+1} = 1 | x, y=0) \tag{By the inductive hypothesis}\\
&= [\prod_{i=1}^k (P(h_i=1|x,h_{i-1}=1) - P(h_i = 1| x, h_{i-1}=0))] \\
&\hspace{1.5em} \times  [P(h_n = 1, h_{n+1} = 1| x, h_k = 1) - P(h_n = 1, h_{n+1} = 1 | x, h_k = 0)] \\
&= [\prod_{i=1}^k (P(h_i=1|x,h_{i-1}=1) - P(h_i = 1| x, h_{i-1}=0))] \\
&\hspace{1.5em}\times  \sum_{h_{k+1}} [P(h_n = 1, h_{n+1} = 1, h_{k+1}| x, h_k = 1) - P(h_n = 1, h_{n+1} = 1, h_{k+1} | x, h_k = 0)] \\
&= [\prod_{i=1}^k (P(h_i=1|x,h_{i-1}=1) - P(h_i = 1| x, h_{i-1}=0))] \\
&\hspace{1.5em}\times  \sum_{h_{k+1}} [P(h_n = 1, h_{n+1} = 1| x, h_k = 1, h_{k+1}) P(h_{k+1} |x, h_k = 1) \\
&\hspace{1.5em} - P(h_n = 1, h_{n+1} = 1 | x, h_k = 0, h_{k+1})  P(h_{k+1} |x, h_k = 0)] \tag{Since $h_n, h_{n+1} \perp h_k | h_{k+1}$}\\
&= [\prod_{i=1}^k (P(h_i=1|x,h_{i-1}=1) - P(h_i = 1| x, h_{i-1}=0))] \\
&\hspace{1.5em}\times  \sum_{h_{k+1}} [P(h_n = 1, h_{n+1} = 1| x, h_{k+1}) P(h_{k+1} |x, h_k = 1) \\
&\hspace{1.5em} - P(h_n = 1, h_{n+1} = 1 | x, h_{k+1})  P(h_{k+1} |x, h_k = 0)] \\
&= [\prod_{i=1}^k (P(h_i=1|x,h_{i-1}=1) - P(h_i = 1| x, h_{i-1}=0))] \\
&\hspace{1.5em}\times  \sum_{h_{k+1}} [P(h_n = 1, h_{n+1} = 1| x, h_{k+1}) (P(h_{k+1} |x, h_k = 1) -  P(h_{k+1} |x, h_k = 0))] \\
&= [\prod_{i=1}^k (P(h_i=1|x,h_{i-1}=1) - P(h_i = 1| x, h_{i-1}=0))] \\
&\hspace{1.5em}\times  [P(h_n = 1, h_{n+1} = 1| x, h_{k+1} = 1) (P(h_{k+1} =1 |x, h_k = 1) -  P(h_{k+1} = 1|x, h_k = 0)) \\
&\hspace{2.5em}+ P(h_n = 1, h_{n+1} = 1| x, h_{k+1}=0) (P(h_{k+1} =0|x, h_k = 1) -  P(h_{k+1} =0|x, h_k = 0))] \\
&= [\prod_{i=1}^k (P(h_i=1|x,h_{i-1}=1) - P(h_i = 1| x, h_{i-1}=0))] \\
&\hspace{1.5em}\times  [P(h_n = 1, h_{n+1} = 1| x, h_{k+1} = 1) (P(h_{k+1} =1 |x, h_k = 1) -  P(h_{k+1} = 1|x, h_k = 0)) \\
&\hspace{2.5em}+ P(h_n = 1, h_{n+1} = 1| x, h_{k+1}=0) \\
&\hspace{2.5em} \times (1 - P(h_{k+1} =1|x, h_k = 1) -  1 + P(h_{k+1} =1|x, h_k = 0))] \\
&= [\prod_{i=1}^k (P(h_i=1|x,h_{i-1}=1) - P(h_i = 1| x, h_{i-1}=0))] \\
&\hspace{1.5em}\times  [P(h_n = 1, h_{n+1} = 1| x, h_{k+1} = 1) (P(h_{k+1} =1 |x, h_k = 1) -  P(h_{k+1} = 1|x, h_k = 0)) \\
&\hspace{2.5em}- P(h_n = 1, h_{n+1} = 1| x, h_{k+1}=0) (P(h_{k+1} =1|x, h_k = 1)- P(h_{k+1} =1|x, h_k = 0))] \\
&= [\prod_{i=1}^{k+1} (P(h_i=1|x,h_{i-1}=1) - P(h_i = 1| x, h_{i-1}=0))] \\
&\hspace{1.5em}\times  [P(h_n = 1, h_{n+1} = 1| x, h_{k+1} = 1) - P(h_n = 1, h_{n+1} = 1| x, h_{k+1}=0)].
\end{align*}
\endgroup
We therefore have that $P(h_n = 1, h_{n+1} = 1 | x, y=1) - P(h_n = 1, h_{n+1} = 1 | x, y=0)$ can be written as a product of the differences $P(h_i=1|x,h_{i-1}=1) - P(h_i = 1| x, h_{i-1}=0)$ multiplied by the difference $P(h_n = 1, h_{n+1} = 1| x, h_{k} = 1) - P(h_n = 1, h_{n+1} = 1| x, h_{k}=0)$, for any $k < n$. 

\vspace{1em}

\noindent Note that from \ref{eqn:finalcond} we have that 
$P(h_i = 1| x, h_{i-1}=1) = \sigma(S_{h_i} + w_{i-1}/L)$ where $w_{i-1}$ is the weight between the two nodes and $S_{h_i}$ is the internal summation at $h_i$ from the forward pass of the neural network prior to applying the sigmoid activation function $\sigma(\cdot)$. It similarly follows that $P(h_i = 1| x, h_{i-1}=0) = \sigma(S_{h_i})$. We then divide and multiply this difference by $w_{i-1}/L$.

\begin{align*}
    &P(h_i=1|x,h_{i-1}=1) - P(h_i = 1| x, h_{i-1}=0)) \\
    &=(w_{i-1}/L) \frac{P(h_i=1|x,h_{i-1}=1) - P(h_i = 1| x, h_{i-1}=0))}{w_{i-1}/L} \\
    &=(w_{i-1}/L) \frac{\sigma(S_{h_i} + w_{i-1}/L) - \sigma(S_{h_i})}{w_{i-1}/L}
\end{align*}

\noindent Note that the right hand term of the above expression has the same form as the definition of a derivative when placed within the limit as $L$ approaches infinity, i.e.:

$$\lim_{L \to \infty} \frac{\sigma(S_{h_i} + w_{i-1}/L) - \sigma(S_{h_i})}{w_{i-1}/L} = \frac{\partial}{\partial S_{h_i}}\sigma(S_{h_i}).$$

\noindent At each step away from the output of the network along this branch, we are therefore multiplying by derivative of the local sigmoid per step (this exactly matches the behaviour of the backward pass of the neural network) multiplied by the relevant weight. There is naturally the concern of the remaining $\frac{1}{L}$ term both per step and in the $(y-\hat{y})/L$ term. Recall that there are $L^{n+1}$ copies of the weight of interest in branches with the exact same structure. We sum over these equivalent weight updates and so have $\sum_{i=1}^{L^{n+1}} \frac{1}{L^{n+1}} = 1$. The summation of all these equivalent weight updates effectively cancels this repeated division (note that this is including the $1/L$ term shown in the final step of this proof).

\vspace{1em}

\noindent We finally consider the difference involving nodes $n$ and $n+1$ conditioned on node $n-1$.

\begin{align*}
    &P(h_n = 1, h_{n+1} = 1 | x, h_{n-1}=1) - P(h_n = 1, h_{n+1} = 1 | x, h_{n-1}=0) \\
    &= P(h_{n+1} = 1 | x, h_{n-1}=1, h_{n} = 1)P(h_n=1 | x, h_{n-1}=1) \\
    &\hspace{1em} - P(h_{n+1} = 1 | x, h_{n-1}=0, h_n = 1)P(h_n=1 | x, h_{n-1}=0) \tag{Since $h_{n+1} \perp h_{n-1} | h_n$}\\
    &= P(h_{n+1} = 1 | x, h_{n} = 1)P(h_n=1 | x, h_{n-1}=1) \\
    & \hspace{1em}- P(h_{n+1} = 1 | x, h_n = 1)P(h_n=1 | x, h_{n-1}=0)  \\
    &=\sigma(S_{n+1} + w_n/L) \times (P(h_n=1 | x, h_{n-1}=1) - P(h_n=1 | x, h_{n-1}=0)) \\
    &=\sigma(S_{n+1} + w_n/L) \times (w_{n-1}/L)(\frac{\sigma(S_{h_n} + w_{n-1}/L)  - \sigma(S_{h_n})}{w_{n-1}/L})
\end{align*}

\noindent The right hand term becomes the derivative of $\sigma(S_{h_n})$ as in earlier steps and the left $\sigma(\cdot)$ term becomes $\sigma(S_{n+1})$ i.e. the forward pass value at node $n+1$, the last node relevant to this weight calculation on this branch.

\vspace{1em}

\noindent The entire weight update then has the form:

\begin{align*}
    (y-\hat{y}) \times [\prod_{i=1}^n w_{i-1} \frac{\partial}{\partial S_{h_i}}\sigma(S_{h_i})] \times \sigma(S_{n+1})
\end{align*}

\noindent This is exactly the gradient update in the neural network when considering a specific path/weight combination in the network. The infinite width PGM naturally has branches for each possible path in the neural network and as such the complete weight updates of both views will align.

\section{HMC Sampling Trajectories}
\label{hmc_traj}
Suppose the current chain state is $\bm{h}^{(n)}=\bm{\rho}_n(0)$. We then draw a momentum $\bm{\mu}_n(0)\sim \mathcal{N}(0,M)$. The HMC trajectories imply that after $\Delta t$, we have:
\begin{equation}
\begin{split}
    \bm{\mu}_n(t+\frac{\Delta t}{2})&=\bm{\mu}_n(t)-\frac{\Delta t}{2}\nabla U(\bm{\rho})\bigg|_{\bm{\rho}=\bm{\rho}_n(t)}\\
    \bm{\rho}_n(t+\Delta t)&=\bm{\rho}_n(t)+\Delta tM^{-1}\bm{\mu}_n(t+\frac{\Delta t}{2})\\
    \bm{\mu}_n(t+\Delta t)&=\bm{\mu}_n(t+\frac{\Delta t}{2})-\frac{\Delta t}{2}\nabla U(\bm{\rho})\bigg|_{\bm{\rho}=\bm{\rho}_n(t+\Delta t)}.
\end{split}  
\end{equation}
We may then apply these equations to $\bm{\rho}_n(0)$ and $\bm{\mu}_n(0)$ $L$ times to get $\bm{\rho}_n(L\Delta t)$ and $\bm{\mu}_n(L\Delta t)$. Thus, the transition from $\bm{h}_{(n)}=\bm{\rho}_n$ to the next state $\bm{h}^{(n+1)}$ is given by:
\begin{equation}
    \bm{h}^{(n+1)}\Big|(\bm{h}^{(n)}=\bm{\rho}_n(0))=
    \begin{cases}
        \bm{\rho}_n(L\Delta t) & \text{with probability $\alpha(\bm{\rho}_n(0),\bm{\rho}_n(L\Delta t))$}\\
        \bm{\rho}_n(0) & \text{otherwise}
    \end{cases}
\end{equation}
where
\begin{equation}
    \alpha(\bm{\rho}_n(0),\bm{\rho}_n(L\Delta t))=\min\bigg(1,\exp\big(H(\bm{\rho}_n(0),\bm{\mu}_n(0)\big)-H(\bm{\rho}_n(L\Delta t),\bm{\mu}_n(L\Delta t))\big)\bigg).
\end{equation}

\section{CD-k Algorithm}
\label{hmc_training_alg}
\algrenewcommand\algorithmicrequire{\textbf{Input:}}
\algrenewcommand\algorithmicensure{\textbf{Output:}}

\begin{algorithm}[H]
\caption{CD-$k$ Learning for the Deep Belief Network}\label{alg:cd-k}
\begin{algorithmic}[1]
\Require{Initialized $\bm{h}^{(0)}$, $\bm{W}^{(0)}=\{\bm{W}_i^{(0)}|i=1,2,...,K\}$, $\bm{b}^{(0)}=\{\bm{b_i}^{(0)}|i=1,2,...,K\}$}
\Ensure{$\bm{W}^{(n)},\bm{b}^{(n)}$ when loss converges.}
\Statex
\Procedure{Burn-in}{$N$}
    \For{$i \gets 0$ to $N-1$}
        \State{$\bm{h}^{(i+1)} \gets$ Sampling$(\bm{h}^{(i)},\bm{W}^{(0)}$,$\bm{b}^{(0)})$}
    \EndFor
\EndProcedure
\Procedure{Training}{$M$}
    \For{$i \gets 0$ to $M-1$}
        \State{$\bm{W}^{(i+1)},\bm{b}^{(i+1)} \gets$ Weight-updating$(\bm{h}^{(N+ik)},\bm{W}^{(i)},\bm{b}^{(i)})$}
        \For{$j \gets 0$ to $k-1$}
        \State{$\bm{h}^{(N+ik+j+1)} \gets$ Sampling$(\bm{h}^{(N+ik+j)},\bm{W}^{(i+1)},\bm{b}^{(i+1)})$}
        \EndFor
    \EndFor
\EndProcedure
\end{algorithmic}
\end{algorithm}

In our experiments, $k=1$, i.e. we do one sampling step before weight updating. We mainly use HMC sampling method for the sampling step, where the detailed trajectories are defined in Appendix \ref{hmc_traj}. The weight-updating step is defined in equations \eqref{eq:lik}, \eqref{eq:loss_func} and \eqref{eq:weight_update}.

We also run Gibbs method for comparison, where results are shown in Table \ref{tab:syn}. Both Gibbs and HMC are sampling methods to generate new states for all the hidden nodes and they both apply CD-1 or CD-k to learn the weight. While HMC samples real values in the range of $[0, 1]$ under the normal distribution, Gibbs samples values in $\{0, 1\}$ under Bernoulli distribution. For node $h_{ij}$, i.e., the $j$th node in $\bm{h}_i$ layer, the Bernoulli distribution used to sample its new state is determined by its Markov blanket, which includes all the nodes in $\bm{h}_{i-1}, \bm{h}_i, \text{and }\bm{h}_{i+1}$ in this case. It could be calculated by normalizing the joint probability distribution of the blanket when $h_{ij}=1$ versus $h_{ij}=0$, which is written as following:

\begin{small}
\begin{align*}
p(h_{ij}=1)=\frac{p(h_{ij}=1|\bm{h}_{i-1})\cdot p(\bm{h}_{i+1}|\bm{h}_i\backslash\{h_{ij}\},h_{ij}=1)} {p(h_{ij}=0|\bm{h}_{i-1})\cdot p(\bm{h}_{i+1}|\bm{h}_i\backslash\{h_{ij}\},h_{ij}=0)+p(h_{ij}=1|\bm{h}_{i-1})\cdot p(\bm{h}_{i+1}|\bm{h}_i\backslash\{h_{ij}\},h_{ij}=1)}
\end{align*}
\end{small}

We then sample a new state for $h_{ij}$ from $\text{Bern}(p(h_{ij}=1))$. In Gibbs sampling, the node is sampled one by one since a newly sampled node will affect the sampling of subsequent nodes in its Markov blanket. After sampling new values for all the hidden nodes, the weight is similarly updated using equations \eqref{eq:lik}, \eqref{eq:loss_func} and \eqref{eq:weight_update}. The Gibbs sampling does not depend on $L$.

\section{Experimental Setup}

For all the experiments, the train-test split ratio is 80:20. For the training and finetuning, Adam optimizer is used with learning rate being $1\times10^{-4}$. To get the predicted probabilities for fine-tuned network, 1000 output probabilities are sampled and averaged.
In synthetic experiments, both BNs and MNs have the structure with the input dimension being 4, two latent layers with 4 nodes in each one, and one binary output. 

\section{Running time}

Running time of one BN experiment in the main text are shown in Table \ref{tab:syntime}.

\begin{table}[h]
    \centering
    \caption{Average running time of different methods on synthetic datasets. For DNN, it shows the time of the training for 100 and 1000 epochs. For Gibbs and HMC, it shows the time of the finetuning for 20 epochs.}
    \resizebox{0.9\textwidth}{!}{
    \begin{tabular}{c|c|c|cccc}
    \toprule
        \multirow{3}{*}{Data (Weight)} & \multirow{3}{*}{\# Train Epochs} & \multicolumn{5}{c}{Average Running Time (s)}\\ \cline{3-7}
        {} & {} & \multirow{2}{*}{DNN} & \multirow{2}{*}{Gibbs} & \multirow{2}{*}{HMC-10} & \multirow{2}{*}{HMC-100} & \multirow{2}{*}{HMC-1000} \\ 
        {} & {} & {} & {} & {} & {} & {}\\ \hline
        \multirow{2}{*}{BN (0.3)} & 100 & 7.07 & 2769.81 & 666.33 & 650.10 & 758.37 \\ 
        {} & 1000 & 60.31 & 2741.30 & 634.20 & 626.79 & 728.57 \\ \hline
        \multirow{2}{*}{BN (1)} & 100 & 7.22 & 2811.77 & 640.68 & 661.12 & 757.21 \\ 
        {} & 1000 & 65.25 & 2784.53 & 617.21 & 643.96 & 730.81 \\ \hline
        \multirow{2}{*}{BN (3)} & 100 & 5.09 & 2616.04 & 615.96 & 615.05 & 636.93 \\ 
        {} & 1000 & 44.83 & 2602.65 & 596.23 & 594.62 & 608.02 \\ \hline
        \multirow{2}{*}{BN (10)} & 100 & 6.87 & 2686.98 & 638.80 & 665.14 & 661.82 \\ 
        {} & 1000 & 48.98 & 2666.23 & 617.72 & 646.08 & 646.06 \\ \hline
        \multirow{2}{*}{MN (0.3)} & 100 & 7.26 & 2703.71 & 645.57 & 635.42 & 729.60 \\ 
        {} & 1000 & 62.38 & 2669.61 & 625.38 & 618.07 & 701.31 \\ \hline
        \multirow{2}{*}{MN (1)} & 100 & 4.94 & 2508.05 & 578.88 & 582.59 & 610.07 \\ 
        {} & 1000 & 44.39 & 2474.70 & 559.11 & 560.54 & 584.33 \\ \hline
        \multirow{2}{*}{MN (3)} & 100 & 10.84 & 2610.14 & 591.43 & 579.26 & 605.13 \\ 
        {} & 1000 & 52.82 & 2578.81 & 570.23 & 558.98 & 585.22 \\
        \bottomrule
    \end{tabular}
    }
    \label{tab:syntime}
\end{table}

\end{document}